\documentclass[10pt,a4paper,english]{article}
\pdfoutput=1
\makeatletter
\oddsidemargin=-0.2cm
\textwidth=16.5cm
\parindent=0pt
\topmargin 0cm
\usepackage{babel}
\makeatother 


\usepackage{amsmath,latexsym,amssymb,amssymb,amsthm}  
              
\usepackage{graphicx,epsfig,setspace,graphics,color}

\usepackage{caption}

\usepackage{cite}

\usepackage[colorlinks=false,
        linkcolor=black,
        citecolor=black,
        filecolor=black,
        urlcolor=black,
        plainpages=false,
        pdfpagelabels=true]{hyperref}
        
\usepackage{subfig}

\newcount\Comments  
\Comments=1   
\usepackage{color}
\definecolor{darkgreen}{rgb}{0,0.5,0}
\definecolor{purple}{rgb}{1,0,1}
\definecolor{darkyellow}{rgb}{0.5,0.5,0}
\newcommand{\kibitz}[2]{\ifnum\Comments=1\textcolor{#1}{#2}\fi}

\newcommand{\N}{\ensuremath{\mathbb{N}}}
\newcommand{\R}{\ensuremath{\mathbb{R}}}

\newcommand{\PP}{\ensuremath{\mathbb{P}}}
\newcommand{\bc}{\ensuremath{\boldsymbol{c}}}
\newcommand{\bt}{\ensuremath{\boldsymbol{t}}}
\newcommand{\bx}{\ensuremath{\boldsymbol{x}}}
\newcommand{\by}{\ensuremath{\boldsymbol{y}}}
\newcommand{\bz}{\ensuremath{\boldsymbol{z}}}
\newcommand{\bxi}{\ensuremath{\boldsymbol{\xi}}}

\newcommand{\argmin}[1]{\underset{#1}{\operatorname{arg}\,\operatorname{min}}\;}

\newcommand{\outerRKHS}{\ensuremath{H\left(\Phi,\R\right)}}
\newcommand{\innerRKHS}{\ensuremath{\boldsymbol{H}\left(\Omega,\Phi\right)}}

\newtheorem{theorem}{Theorem}  
\newtheorem{lemma}[theorem]{Lemma}

\newtheorem{corollary}[theorem]{Corollary}  

\newtheorem{definition}[theorem]{Definition}

\begin{document}
\author{Bastian Bohn\footnotemark[2] \and Michael Griebel\footnotemark[2] \footnotemark[3] \and Christian Rieger\footnotemark[2]}
\date{\today}
\title{A representer theorem for deep kernel learning}

\maketitle

\renewcommand{\thefootnote}{\fnsymbol{footnote}}

\footnotetext[2]{Institute for Numerical Simulation, University of Bonn, Wegelerstr. 6, 53115 Bonn, Germany.}
\footnotetext[3]{Fraunhofer Institute for Algorithms and Scientific Computing SCAI, Schloss Birlinghoven, 53754 Sankt\\ Augustin, Germany.\\
The authors want to thank the anonymous referees for their suggestions and remarks and especially for pointing out the relation to \cite{DinuzzoPhd}.
The authors were partially supported by the Sonderforschungsbereich 1060 \textit{The Mathematics of Emergent Effects} funded by the Deutsche Forschungsgemeinschaft.}

\renewcommand{\thefootnote}{\arabic{footnote}}

\begin{abstract}%
In this paper we provide a finite-sample and an infinite-sample representer theorem for the concatenation of 
(linear combinations of) kernel functions of reproducing kernel Hilbert spaces. 
These results serve as mathematical foundation for the analysis of machine learning algorithms based on compositions of functions. 
As a direct consequence in the finite-sample case, the corresponding infinite-dimensional minimization problems can be recast into (nonlinear) finite-dimensional 
minimization problems, which can be tackled with nonlinear optimization algorithms.
Moreover, we show how concatenated machine learning problems can be reformulated as neural networks
and how our representer theorem applies to a broad class of state-of-the-art deep learning methods.
\end{abstract}


\section{Introduction}
\label{sec:Introduction}

The interpolation or regression of given function values is one of the main tasks in modern data mining and machine learning applications. Due to the famous representer theorem 
for empirical risk minimization in reproducing kernel Hilbert spaces (RKHS), see e.g.~\cite{Smola,Steinwart,WahbaKimeldorf}, various algorithms based on finite linear combinations of kernel translates have gained much popularity in the last decade, like, for example, support vector machines (SVMs) and 
Tikhonov-regularized least-squares in RKHS. 
In general, these methods work very well if the underlying problem fits the chosen reproducing kernel space $H$, e.g.~if the 
given input values stem from a function $g \in H$.
However, if $H$ contains for instance only smooth functions but $g$ has a kink or a jump, the interpolant or regressor, respectively, 
in $H$ might not represent a good approximation to the true function $g$ anymore. 
Then, if it is not known how to choose an appropriate kernel $K$ of $H$ a priorily, one usually relies on so-called multiple kernel learning (MKL) algorithms, which try 
to determine the optimal kernel adaptively, see e.g.~\cite{BachMKL}. But while most of these methods allow to learn a suitable kernel by simply constructing a linear or convex combination
of a given set of input kernels, they still do not achieve considerably better results than standard a-priori kernel choices for many applications, see \cite{Goenen}. 

In recent years, promising new variants of kernel learning methods, namely deep kernel learning and multi-layer-MKL (MLMKL)
algorithms have been developed. They have proven to be very successful in regression and classification tasks.
Here, motivated by multi-layer feed-forward neural networks, 
a kernel function is concatenated with one or more nonlinear functions in order to achieve a highly flexible new kernel function, 
see e.g.~\cite{ChoSaul1,LawrenceDeepGauss,Rebai,Strobl,Wilson,Zhuang}. 
The main idea behind this approach is to combine the flexibility of deep neural networks, in which the feature detection in the data set is done completely automatically,
with the approximation power of kernel methods, in which a feature map is determined by the chosen kernel. This way, the neural network architecture learns the optimal 
kernel that best represents important features of the data for the task at hand.
 While first steps towards creating a mathematical framework to analyze 
deep neural networks - especially for image classification tasks - have been made in e.g.~\cite{Mallat,Mhaskar,DeepTaylorDecomp}, deep approximation theory 
for kernel based approaches is still missing at large. Moreover, the underlying nonlinear minimization problem is usually tackled by simple gradient descent and 
heuristic backpropagation algorithms without a thorough theoretical analysis of its properties. An initial cornerstone for the analysis of chained kernel approximations has been 
provided by \cite{DinuzzoPhd}, where two-layer kernel networks were considered and their relation to MKL was established. However, an analysis of deeper kernel networks and their connection 
to MLMKL has not been considered so far.

In this paper, we consider the problem of optimal concatenated approximation in reproducing kernel Hilbert spaces, 
which will directly lead to a variant of multi-layer kernel learning problems and will extend the results achieved in \cite{DinuzzoPhd}. For this class, we will prove a representer theorem, 
which allows us to reduce the nonlinear, potentially infinite-dimensional optimization problem to a finite-dimensional one. Consequently, standard nonlinear optimization techniques can 
be used to tackle this problem. At least to our knowledge, this is the first derivation of a representer theorem for concatenated function approximation in the literature. 
It is also valid for certain types of hidden layer neural networks and deep SVMs. 

The remainder of this paper is organized as follows: 
In Section \ref{sec:IntRKHS}, we briefly review the interpolation and the regression problem in an (possibly infinite-dimensional) RKHS
and discuss how the classical representer theorem allows to recast these problems into finite-dimensional linear equation systems. 
In Section \ref{sec:IntChained}, we introduce the optimal {\em concatenated} approximation problem for arbitrary loss functions and regularizers. We 
derive a representer theorem for this problem in the multi-layer case and discuss its relation to deep learning and multi-layer multiple kernel learning methods.
Furthermore, we exemplarily derive algorithms for interpolation and least-squares regression in the two-layer case from it. The latter will be a natural generalization of 
the RLS2 method developed in \cite{DinuzzoPhd}, which only deals with a linear outer kernel.
Section \ref{sec:Experiments} illustrates the application of our concatenated interpolation and regression algorithms to two simple examples and serves as a proof of concept.
Finally, we conclude with a summary and an outlook 
in Section \ref{sec:Conclusion}.

\section{Interpolation and regression in reproducing kernel Hilbert spaces}
\label{sec:IntRKHS}
In this section we shortly review interpolation and least-squares regression problems, respectively, in an RKHS. 
To this end, we consider the standard representer theorem and show how it helps to find an interpolant/regressor.

\subsection{Interpolation}

Let $\Omega \subset \R^d$ be an open domain and let the pairwise disjoint points $X:=\left\{\bx_{1},\ldots,\bx_{N} \right\}\subset \Omega$ 
and the values $Y := \{y_1,\ldots,y_N\} \subset \R$ be given. Let furthermore $H := H(\Omega,\R)$ be a reproducing kernel Hilbert space of real-valued functions on $\Omega$.
The minimal norm interpolant is
\begin{equation} \label{eq:interpolStd}
 f^*_{X,Y} := \argmin{f \in H} \| f \|_{H} ~~ \text{ such that } ~ f(\bx_i) = y_i ~~ \forall\, i = 1,\ldots,N.
\end{equation}
The classical representer theorem, see e.g.~\cite{Smola,Steinwart} for scalar-valued functions and \cite{Micchelli} for vector-valued functions,
now states that $f^*_{X,Y}$ can be written as a {\em finite} linear combination of kernel evaluations in the data, namely
\begin{equation} \label{eq:RepresenterTheorem}
 f^*_{X,Y}(\bx) = \sum_{i=1}^N \alpha^*_i K(\bx_i, \bx),
\end{equation}
where $K: \Omega \times \Omega \to \R$ denotes the reproducing kernel of $H$ and $\alpha^*_i \in \R, i=1,\ldots,N$, are the corresponding coefficients. For details on 
RKHS, see \cite{Aron}. Therefore, the 
solution to the possibly infinite-dimensional optimization problem \eqref{eq:interpolStd} resides in the $N$-dimensional span of the functions $K(\bx_i, \cdot), i=1,\ldots,N$. 
To compute the coefficients, we simply have to solve the system 
\begin{equation} \label{eq:LGSRepresenterTheoremInt}
\boldsymbol{M}_{X,X} \boldsymbol{\alpha}^* = \boldsymbol{y}
\end{equation} 
of linear equations with 
\begin{equation} \label{eq:MatrixCoeffsAndRHS}
\boldsymbol{M}_{X,X} := \begin{pmatrix} 
K(\bx_{1},\bx_{1}) & \ldots & K(\bx_{1},\bx_{N}) \\
\vdots & \ddots & \vdots\\
K(\bx_{N},\bx_{1}) & \ldots & K(\bx_{N},\bx_{N}) 
\end{pmatrix},
\quad 
\boldsymbol{\alpha}^* :=
\begin{pmatrix}
\alpha^*_{1} \\ \vdots \\ \alpha^*_{N}
\end{pmatrix} 
\quad \text{and } \boldsymbol{y} :=\begin{pmatrix}
y_{1} \\ \vdots \\ y_{N}
\end{pmatrix}.
\end{equation}
Note that this $N \times N$ system admits a unique solution if the kernel $K$ is strictly positive definite. For example, for Sobolev kernels it can be shown
that the condition number of the system matrix $\boldsymbol{M}_{X,X}$ only grows moderately with the size $N$ provided that
the data points are quasi-uniformly distributed, see \cite{SchabackdeMarchi}. 
Moreover, for infinitely smooth kernel functions (e.g.~Gaussian kernels or multiquadrics) it can be necessary to perform an appropriate basis change 
before solving the above equation system, see e.g.~\cite{Wendland}.

\subsection{Least-squares regression}

In real-world applications, the values $y_i, i=1,\ldots,N$ are usually not exactly given, but are perturbed by some noise term. 
Therefore, a direct interpolation might no longer be appropriate. In this case, one considers the corresponding regularized least-squares regression problem
\begin{equation} \label{eq:regressStd}
 f^{\lambda}_{X,Y} := \argmin{f \in H} \lambda \| f \|^2_{H} + \sum_{j=1}^N |f(\bx_i) - y_i|^2,
\end{equation}
where the side condition in \eqref{eq:interpolStd} is substituted by a penalty term. Here, the Lagrange multiplier $\lambda$ weights
the importance of the norm minimization against the function evaluation error. 
Again, the representer theorem \cite{Micchelli, Smola, Steinwart} tells us that $f^{\lambda}_{X,Y}$ is of the form \eqref{eq:RepresenterTheorem}, i.e.~
$$
 f^{\lambda}_{X,Y}(\bx) = \sum_{i=1}^N \alpha^{\lambda}_i K(\bx_i, \bx).
$$
This time the coefficients $\alpha^{\lambda}_i,i=1,\ldots,N$, are determined by
\begin{equation} \label{eq:LGSRepresenterTheoremReg}
 \left(\boldsymbol{M}_{X,X} + \lambda \boldsymbol{I} \right) \boldsymbol{\alpha}^{\lambda} = \boldsymbol{y},
\end{equation}
where $\boldsymbol{I}$ denotes the $N \times N$ identity matrix.
The size of the Lagrange parameter $\lambda > 0$ now also influences the condition number of the system matrix, i.e.~the larger $\lambda$ is, the smaller the condition number becomes.

\section{Interpolation and regression with compositions of reproducing kernel Hilbert spaces}
\label{sec:IntChained}

As already mentioned in the introduction, the standard interpolation and regression algorithms in RKHS work well if the samples 
$y_i$ are (perturbed) evaluations of a function $g \in H$, where the reproducing kernel space $H$ is known in the first place.
However, if the appropriate RKHS $H$ is unknown, it is advisable to resort to multiple kernel learning methods or multi-layer multiple kernel learning methods.

We now explain this aspect in more detail and, to this end, motivate a first two-dimensional, two-layer approach with an example: 
Let the kernel $K$ of $H$ be a tensor-product of
two univariate Mat\'{e}rn Sobolev kernels of order one on $\R$, see Section \ref{sec:Experiments} for a definition of this kernel. The corresponding function space $H$ is often also
called Sobolev space of ``mixed smoothness'' of order one and it is of special importance for e.g.~sparse grid discretizations, see \cite{BungartzGriebel}, and quasi Monte Carlo quadrature, see \cite{Hinrichs.Markhasin.Oettershagen}.
Now, let us consider the continuous function $g_1(x,y) := (0.1 + |x|)^{-1}$, which has a kink that is perpendicular to the $x$-axis. It can easily be shown that $g_1 \in H$ 
and, therefore, the interpolant of $g_1$ by a function from $H$ resembles a good approximation to $g_1$, see Figure \ref{fig:H1mixRotExample}(a). 
If we now look at $g_2(x,y) :=  (0.1+|x-y|)^{-1}$, which has a kink along the diagonal with $x=y$, then $g_2 \notin H$. 
Therefore, the interpolant of $g_2$ by a function in $H$ is a rather bad approximation to $g_2$. This can be seen in Figure \ref{fig:H1mixRotExample}(b).
However, if we let $R^{-1}$ be
a rotation by $45^{\circ}$, then $g_2 \circ R^{-1} \in H$ would have an axis-aligned kink like $g_1$.
To use this fact when interpolating $g_2$, we can simply look for the best 
interpolant in $\{ f \circ R \mid f \in H\}$ in \eqref{eq:interpolStd} instead of $f \in H$. This example is illustrated in Figure \ref{fig:H1mixRotExample}(c). 
As we can see, the interpolant in Figure \ref{fig:H1mixRotExample}(c) is a much better representative for $g_2$ than the one in Figure \ref{fig:H1mixRotExample}(b).
This example illustrates that, already in the very simple case of employing a concatenation with a rotation, a two-layer approach can be a good choice to overcome the restrictions 
of a standard kernel learning algorithm. 
Let us remark that already this motivating example exhibits a fundamentally different setting from the one considered in \cite{DinuzzoPhd} because of the 
nonlinearity of the outer kernel. While the RLS2 algorithm introduced there can be interpreted as an MKL variant, where a convex combination of 
given kernel functions is computed, we are looking for an inner function, which transforms the domain in such a way that it is optimal for the (possibly nonlinear) outer kernel.

\begin{figure}
\centering
\subfloat[$y_i = g_1(\bx_i), f \in H$]{
\includegraphics[width=0.32\linewidth]{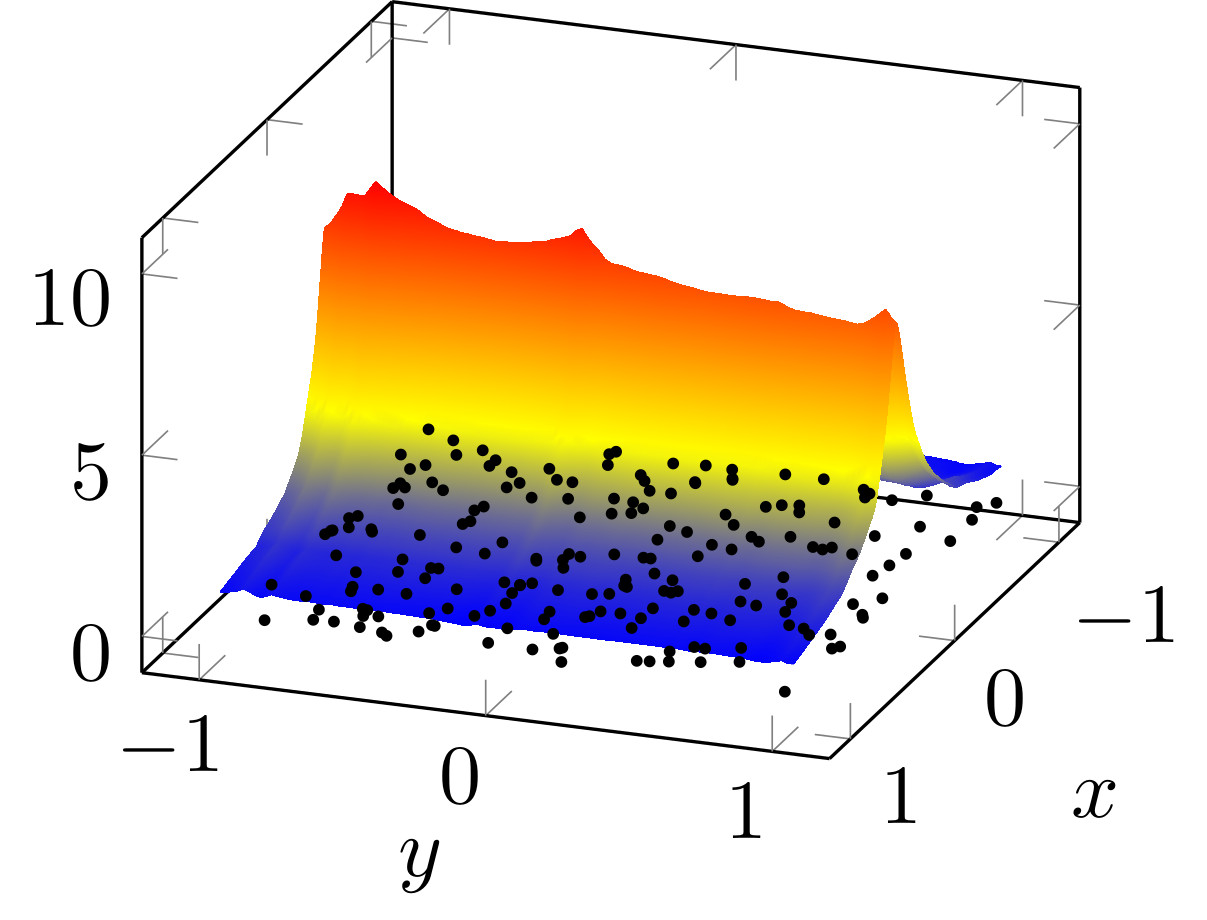}}
\hfill
\subfloat[$y_i = g_2(\bx_i), f \in H$]{
\includegraphics[width=0.32\linewidth]{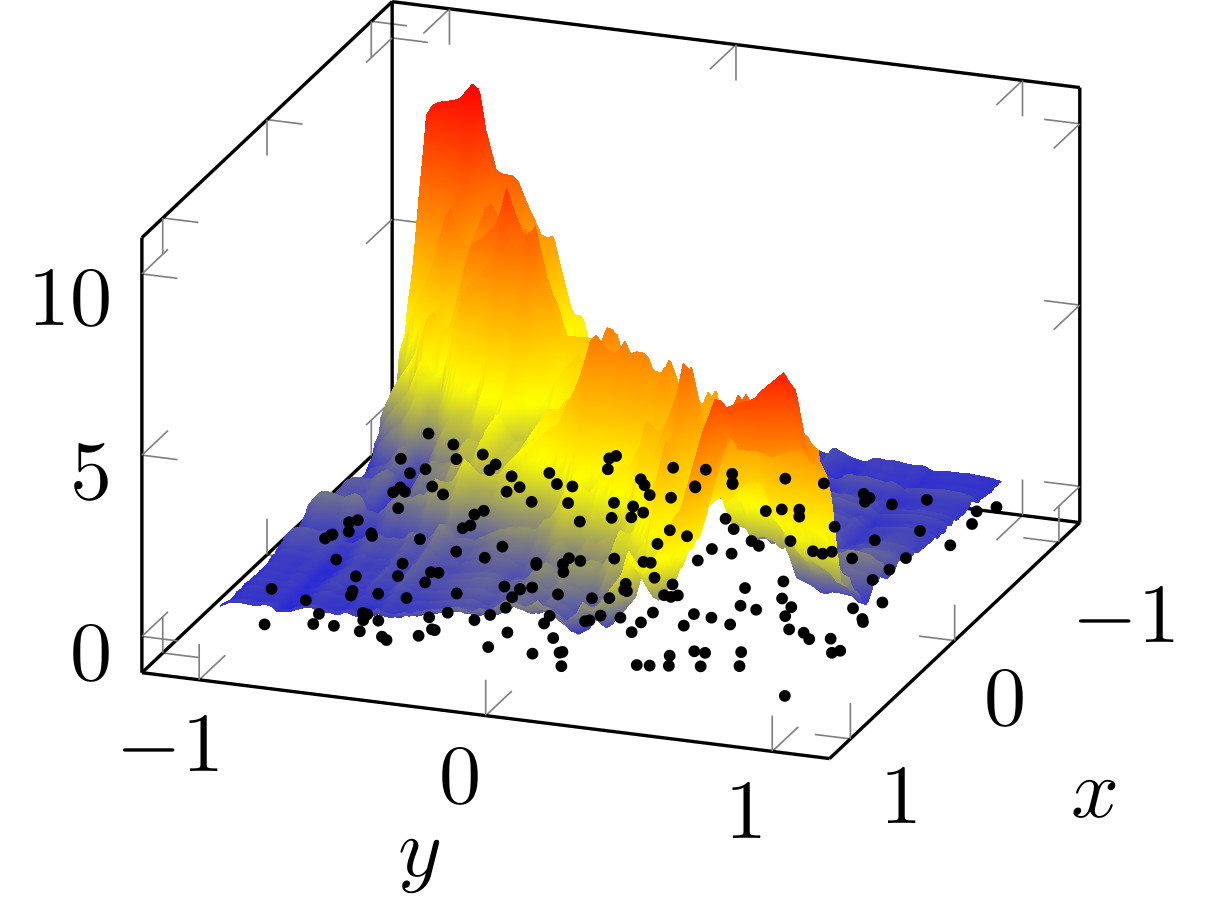}}
\hfill
\subfloat[$y_i = g_2(\bx_i), f \in \{ h \circ R \mid h \in H \}$]{
\includegraphics[width=0.32\linewidth]{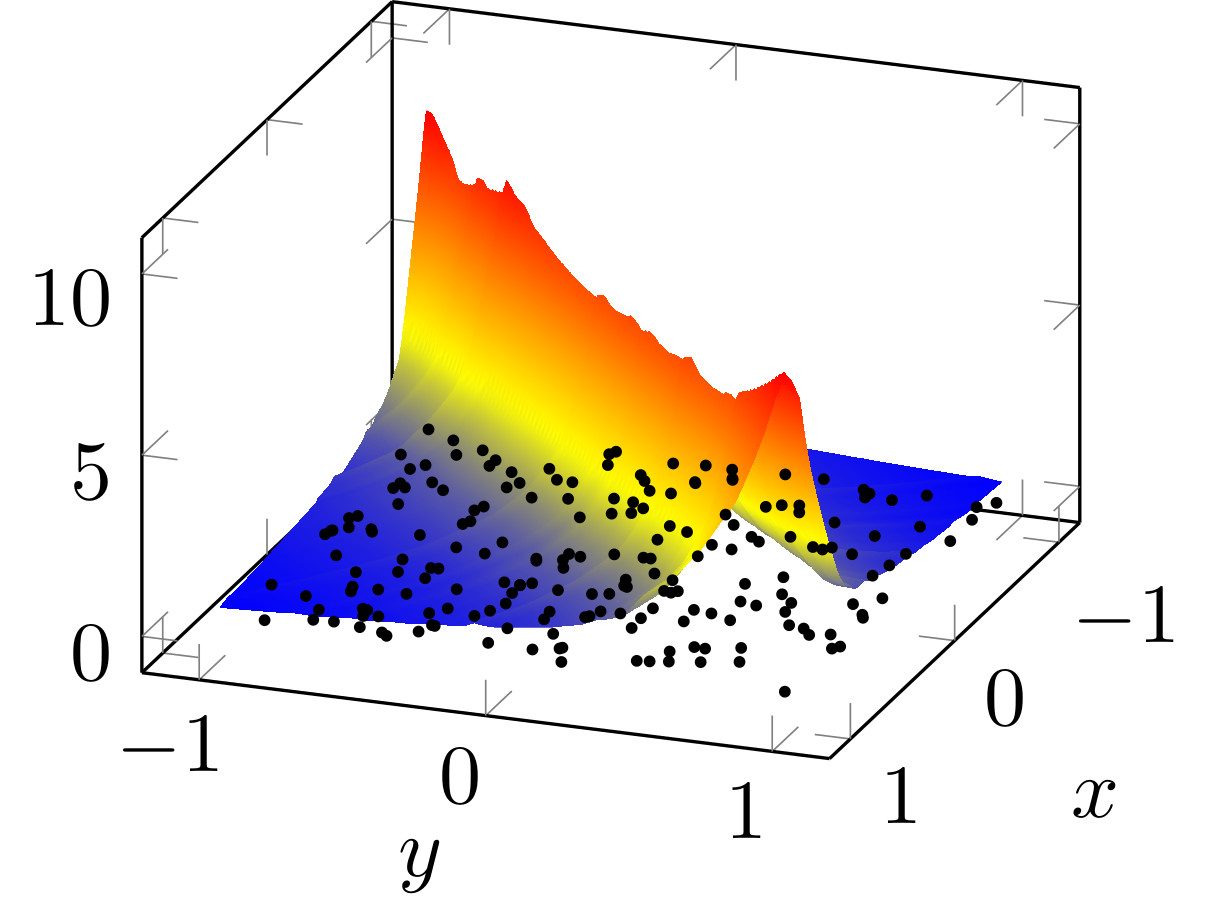}}
\caption{Solutions to \eqref{eq:interpolStd} in the two-variate tensor-product Mat\'{e}rn-kernel Sobolev space $H$ of order one, see also \cite{FasshauerYe}, with 
$200$ uniform samples $\bx_i, i=1,\ldots,200$ (marked in black), shown in the domain $[-1,1]^2$. (a) depicts the solution $f \in H$ for values $y_i$ sampled from $g_1$, whereas (b) shows the optimal 
solution for $y_i$ sampled from $g_2$. (c) presents the best interpolant of 
type $f \circ R$, where $f \in H$ and $R$ is a rotation by $45^{\circ}$ for $y_i$ sampled from $g_2$. 
For reasons of comparability, we restricted our representation to $[-1,1]^2$ here, although
some data points were mapped outside of this domain by applying the rotation $R$ and the kernel was defined on the whole $\R^2$.}
\label{fig:H1mixRotExample}
\end{figure}

Now, instead of just considering one layer of simple rotations as in the above example, we allow for a fully flexible multi-layer kernel learning approach, 
where we employ arbitrary functions from reproducing kernel Hilbert spaces in each layer.
This approach can successfully deal with a much broader class of interpolation and regression problems, see also \cite{Rebai,Zhuang}. 
To this end, we consider \emph{concatenated} machine learning problems. We introduce a new representer theorem for the case 
of multiple concatenations of functions from RKHS, which allows us to derive 
the related, finite-dimensional, nonlinear optimization problem.

\subsection{A representer theorem for concatenated kernel learning}
\label{subsec:generalRepTheo}

In this section, we show how a {\em concatenated representer theorem} can be derived for a very general class of problem types and an arbitrary number $L \in \N$ of concatenations. 
For more details on vector-valued reproducing kernel Hilbert spaces, we refer the reader to \cite{Micchelli}. For a two-layer variant of this theorem, we refer to \cite{DinuzzoPhd}.

\begin{theorem} \label{theo:generalRepTheo}
 Let ${\cal H}_1, \ldots, {\cal H}_L$ be reproducing kernel Hilbert spaces of functions with finite-dimensional domains $D_l$ and ranges $R_l \subseteq \R^{d_l}$ with $d_l \in \N$ for
 $l=1,\ldots,L$ 
 such that $R_l \subseteq D_{l-1}$ for $l=2,\ldots,L$, $D_L = \Omega$ and $R_1 \subseteq \R$. Let furthermore
 ${\cal L} : \R^2 \to [0,\infty]$ be an arbitrary 
 loss function and let $\Theta_1, \ldots, \Theta_L : [0,\infty) \to [0,\infty)$ be strictly monotonically increasing functions. 
 Then, a set of minimizers $\left(f_l\right)_{l=1}^L$ with $f_l \in {\cal H}_l$ of 
 \begin{equation} \label{eq:generalMinFunct}
 J(f_1,\ldots,f_L) := \sum_{i=1}^N {\cal L}\left(y_i,f_1 \circ \ldots \circ f_L(\bx_i)\right) + \sum_{l=1}^L \Theta_l\left(\| f_l \|^2_{{\cal H}_l}\right)
 \end{equation}
fulfills $f_l \in \tilde{V}_l \subset {\cal H}_l$ for all $l=1,\ldots,L$ with 
$$\tilde{V}_l = \operatorname{span}\left\{ K_l\left( f_{l+1} \circ \ldots \circ f_{L}\left(\bx_i\right), \cdot \right) \boldsymbol{e}_{k_l} \mid 
i = 1,\ldots,N \text{ and } k_l = 1,\ldots,d_l\right\},$$
where $K_l$ denotes the reproducing kernel of ${\cal H}_l$ and $\boldsymbol{e}_{k_l} \in \R^{d_l}$ is the $k_l$-th unit vector.
\end{theorem}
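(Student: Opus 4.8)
The plan is to prove the claim one layer at a time, isolating a single function $f_l$ while holding all the others fixed, and then invoking the classical (vector-valued) representer theorem on the resulting one-layer subproblem. The key observation is that the data enters the functional $J$ only through the finitely many evaluations $f_1 \circ \ldots \circ f_L(\bx_i)$, $i = 1,\ldots,N$. So first I would fix an index $l \in \{1,\ldots,L\}$ and regard $f_1,\ldots,f_{l-1},f_{l+1},\ldots,f_L$ as given minimizers; the only free argument is $f_l \in {\cal H}_l$. Introduce the fixed points
\begin{equation} \label{eq:fixedArgs}
 \boldsymbol{z}_i := f_{l+1} \circ \ldots \circ f_L(\bx_i) \in D_l, \qquad i = 1,\ldots,N,
\end{equation}
which are the arguments at which $f_l$ is evaluated (with the convention that $\boldsymbol{z}_i = \bx_i$ when $l = L$). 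Then $f_1 \circ \ldots \circ f_L(\bx_i) = f_1 \circ \ldots \circ f_{l-1}\bigl(f_l(\boldsymbol{z}_i)\bigr)$, so the composite value at each sample depends on $f_l$ solely through the $N$ range vectors $f_l(\boldsymbol{z}_1),\ldots,f_l(\boldsymbol{z}_N) \in R_l \subseteq \R^{d_l}$.

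Next I would carry out the standard orthogonal-decomposition argument in the single space ${\cal H}_l$. Define the finite-dimensional subspace
$$
 \tilde{V}_l := \operatorname{span}\left\{ K_l(\boldsymbol{z}_i, \cdot)\, \boldsymbol{e}_{k_l} \mid i = 1,\ldots,N,\ k_l = 1,\ldots,d_l \right\} \subset {\cal H}_l,
$$
which is exactly the span appearing in the statement once \eqref{eq:fixedArgs} is substituted. Split any candidate $f_l = v + w$ orthogonally, with $v \in \tilde{V}_l$ and $w \in \tilde{V}_l^{\perp}$. The vector-valued reproducing property gives, for each component $k_l$ and each sample $i$,
$$
 \langle w(\boldsymbol{z}_i), \boldsymbol{e}_{k_l} \rangle_{\R^{d_l}} = \langle w, K_l(\boldsymbol{z}_i,\cdot)\,\boldsymbol{e}_{k_l} \rangle_{{\cal H}_l} = 0,
$$
so $w(\boldsymbol{z}_i) = 0$ for all $i$, hence $f_l(\boldsymbol{z}_i) = v(\boldsymbol{z}_i)$. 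Consequently the loss term $\sum_i {\cal L}(y_i, f_1 \circ \ldots \circ f_L(\bx_i))$ is unchanged if $f_l$ is replaced by its projection $v$, because all relevant evaluations are identical. For the regularization term I would use $\| f_l \|^2_{{\cal H}_l} = \| v \|^2_{{\cal H}_l} + \| w \|^2_{{\cal H}_l} \geq \| v \|^2_{{\cal H}_l}$, with equality iff $w = 0$; since $\Theta_l$ is strictly monotonically increasing, dropping $w$ does not increase $\Theta_l(\| f_l \|^2_{{\cal H}_l})$, and the other penalties $\Theta_{l'}$, $l' \neq l$, are untouched. Therefore replacing $f_l$ by $v \in \tilde{V}_l$ cannot increase $J$, and since $f_l$ was assumed to be part of a minimizing tuple, $f_l \in \tilde{V}_l$ (otherwise $v$ would give a strictly smaller value of the regularizer, contradicting minimality). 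Running this argument for every $l = 1,\ldots,L$ yields the theorem.

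The main obstacle I anticipate is the circularity in the definition of $\tilde{V}_l$: the subspace into which $f_l$ is forced depends on the points $\boldsymbol{z}_i = f_{l+1} \circ \ldots \circ f_L(\bx_i)$, which themselves are built from the \emph{other} minimizers. So the per-layer projection argument does not immediately produce a clean closed form, because projecting $f_l$ potentially changes the evaluation points relevant for the layers $f_1,\ldots,f_{l-1}$ sitting \emph{outside} it. The resolution, which I would make precise, is that the inner arguments $\boldsymbol{z}_i$ for layer $l$ depend only on $f_{l+1},\ldots,f_L$ and not on $f_l$ itself, so fixing those outer-to-inner functions and optimizing $f_l$ leaves the points $\boldsymbol{z}_i$ genuinely fixed; the projection of $f_l$ then alters the \emph{outputs} $f_l(\boldsymbol{z}_i)$ only on the orthogonal complement, where they vanish, so the downstream compositions through $f_{l-1},\ldots,f_1$ see identical inputs. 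This is why the statement is phrased as a property of the minimizing tuple rather than as a constructive recursion, and the argument must be read as: at a minimizer, \emph{each} coordinate $f_l$ already lies in the span determined by the remaining coordinates. I would state this explicitly to avoid the appearance of a fixed-point construction, and take care that the reproducing kernel $K_l$ of the vector-valued space ${\cal H}_l$ is the matrix-valued kernel whose evaluations $K_l(\boldsymbol{z}_i,\cdot)\,\boldsymbol{e}_{k_l}$ are the relevant representers, as in \cite{Micchelli}.
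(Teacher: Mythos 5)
Your proposal is correct and rests on the same mechanism as the paper's own proof: the orthogonal split $f_l = v + w$ with respect to $\tilde{V}_l$, the vector-valued reproducing property showing that $w$ vanishes at the fixed evaluation points $f_{l+1}\circ\ldots\circ f_L(\bx_i)$ so that the loss term is untouched, and the strict monotonicity of $\Theta_l$ to force $w=0$ at a minimizer. The only structural difference is that you project one coordinate at a time, invoking coordinate-wise minimality of the tuple, whereas the paper replaces all $L$ layers by their projections simultaneously and verifies via the iterated identity \eqref{eq:PointEvalRep} that the composite evaluations are preserved under this simultaneous replacement; both variants are valid and of essentially the same length.
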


\begin{proof}
We denote by $\Pi_{\tilde{V}_l}$ and $\Pi_{\tilde{V}_l}^{\perp}$ the projector 
 onto $\tilde{V}_l$ and its orthogonal complement in ${\cal H}_l$, respectively, for $l = 1,\ldots,L$. First, we note that
 \begin{align*}
 f_l \circ f_{l+1} \circ \ldots \circ f_L(\bx_i)  & = 
\sum_{k=1}^{d_{l}}\left(\Pi_{ \tilde{V}_l }\left(f_l\right) + \Pi_{ \tilde{V}_l^{\perp} }\left(f_l \right) ,
K_l\left(f_{l+1} \circ \ldots \circ f_L(\bx_i),\cdot \right)\boldsymbol{e}_{k} \right)_{{\cal H}_l} \cdot \boldsymbol{e}_{k} \\
& = \sum_{k=1}^{d_{l}} \left(\Pi_{ \tilde{V}_{l} }\left(f_l\right), K_l\left(f_{l+1} \circ \ldots \circ f_L(\bx_{i}),\cdot \right)\boldsymbol{e}_{k} \right)_{{\cal H}_l}
\cdot \boldsymbol{e}_{k} \\
&= \sum_{k=1}^{d_{l}} \left(\boldsymbol{e}_{k}^T\Pi_{ \tilde{V}_l }\left(f_l\right) \left(f_{l+1} \circ \ldots \circ f_L(\bx_{i}) \right)\right) \cdot \boldsymbol{e}_{k} 
\\
 &=\Pi_{ \tilde{V}_{l} }\left(f_l\right) \left(f_{l+1} \circ \ldots \circ f_L(\bx_{i}) \right)
 \end{align*}
 for all $i=1,\ldots,N$ and $l=1,\ldots,L$. 
 Since this holds for each function in the chain, we can iterate this process to obtain
 \begin{equation} \label{eq:PointEvalRep}
 f_l \circ f_{l+1} \circ \ldots \circ f_L(\bx_i) = \Pi_{ \tilde{V}_{l} }\left(f_l\right) \circ \Pi_{\tilde{V}_{l+1}}\left(f_{l+1}\right) \circ \ldots \circ 
 \Pi_{\tilde{V}_L} \left(f_L\right) (\bx_{i})
 \end{equation}
 for each $l=1,\ldots,L$. Therefore, we have
 \begin{align*}
 J(f_1,\ldots,f_L) & = \sum_{i=1}^N {\cal L}\left(y_i,\Pi_{\tilde{V}_1}(f_1) \circ \ldots \circ \Pi_{\tilde{V}_L}(f_L)(\bx_i)\right) 
 \\ & + \sum_{l=1}^{L} \Theta_l\left(\| \Pi_{\tilde{V}_l} (f_l) \|^2_{{\cal H}_l} + \| \Pi_{\tilde{V}_l^{\perp}}(f_l) \|^2_{{\cal H}_l} \right) 
 \geq J(\Pi_{\tilde{V}_1}(f_1),\ldots,\Pi_{\tilde{V}_L}(f_L))
 \end{align*}
 and equality only holds if $f_l \in \tilde{V}_l$ for each $l=1,\ldots,L$ because of the strict monotonicity of each $\Theta_l$. This completes the proof.
 \end{proof}
 
 Note that, because of \eqref{eq:PointEvalRep}, we could even state a more general version of Theorem \ref{theo:generalRepTheo} where the loss function ${\cal L}$ not only 
 depends on the point evaluations $f_1 \circ \ldots \circ f_L(\bx_i)$ for $i=1,\ldots,N$, but also on the intermediate values $f_l \circ \ldots \circ f_L(\bx_i)$ for any 
 $l = 2,\ldots,L$. However, for the sake of readability, we proceed with \eqref{eq:generalMinFunct}. 
Theorem \ref{theo:generalRepTheo} now states that 
\begin{equation} \label{eq:generalRepTheo}
(f_1,\ldots,f_L) = \argmin{\underset{l=1,\ldots,L}{\bar{f}_l \in {\cal H}_l}} J(\bar{f}_1,\ldots,\bar{f}_L) = \argmin{\underset{l=1,\ldots,L}{\bar{f}_l \in \tilde{V}_l}} J(\bar{f}_1,\ldots,\bar{f}_L)
\end{equation}
with $J$ from \eqref{eq:generalMinFunct}. This means that the (possibly) infinite-dimensional optimization problem 
$$ \argmin{\underset{l=1,\ldots,L}{\bar{f}_l \in {\cal H}_l}} J(\bar{f}_1,\ldots,\bar{f}_L)$$
can be recast into the finite-dimensional optimization problem 
$$
\argmin{\underset{l=1,\ldots,L}{\bar{f}_l \in \tilde{V}_l}} J(\bar{f}_1,\ldots,\bar{f}_L).
$$
In this way, our representer theorem is a direct extension 
of the classical representer theorem, see Section \ref{sec:IntRKHS} and \cite{Smola}, to concatenated functions.
We obtain that the solution to \eqref{eq:generalRepTheo} is given by a linear combination of at most $N$ basis functions in each layer. Therefore, 
the overall number of degrees of freedom in the underlying optimization problem \eqref{eq:generalRepTheo} is given by 
$$
\# \text{dof} = \sum_{l=1}^L \dim\left(\tilde{V}_l\right) = \sum_{l=1}^L N \cdot d_l = N \cdot \left(1 + \sum_{l=2}^L d_l\right).
$$
According to Theorem \ref{theo:generalRepTheo}, we can write $f_1$ as 
$$
f_1(\cdot) = \sum_{j=1}^N \alpha_j K_1\left(f_2 \circ \ldots \circ f_L(\bx_j), \cdot \right)
$$
for some coefficients $\alpha_j \in \R$. Therefore, the concatenated function $h(\cdot) = f_1 \circ \ldots \circ f_L(\cdot)$, which we are interested in, 
can be expressed as 
$$
h(\cdot) = \sum_{j=1}^N \alpha_j \mathcal{K}^L(\bx_j,\cdot)
$$
with the deep kernel
\begin{equation} \label{eq:deepKernelGeneral}
\mathcal{K}^L(\bx,\by) = K_1\left(f_2 \circ \ldots \circ f_L(\bx), f_2 \circ \ldots \circ f_L(\by)\right).
\end{equation}
Due to the definition of $\tilde{V}_l$ for $l=1,\ldots,L$, the corresponding $f_l$ is defined recursively. In general, it is thus not possible to simply write 
down a closed formula for $\mathcal{K}^L$ for arbitrary $L$. To illustrate the structure of the kernel $\mathcal{K}^L$, we therefore consider 
a two-layer example with $L=2$ in the following.
In this case, we obtain $\tilde{V}_2 = \operatorname{span}\{ K_2(\bx_i, \cdot) \boldsymbol{e}_{k_2} \mid  i=1,\ldots,N \text{ and } k_2 = 1,\ldots,d_2\}$. 
From Theorem \ref{theo:generalRepTheo}, we know that 
$$
f_2(\cdot) = \sum_{i=1}^N \sum_{k_2=1}^{d_2} c_{i,k_2} K_2(\bx_i, \cdot) \boldsymbol{e}_{k_2}
$$ 
for certain coefficients $c_{i,k_2} \in \R$. 
Furthermore, we have that $f_1 \in \tilde{V}_1 = \operatorname{span}\{ K_1(f_2(\bx_i), \cdot) \mid i=1,\ldots,N\}$ and thus 
$$
f_1(\cdot) = \sum_{j=1}^N \alpha_j K_1\left(\sum_{i=1}^N \sum_{k_2=1}^{d_2} c_{i,k_2} K_2(\bx_i, \bx_j) \boldsymbol{e}_{k_2}, \cdot\right).
$$
The concatenated function is then given by 
$
h(\cdot) := f_1 \circ f_2(\cdot) = \sum_{j=1}^N \alpha_j \mathcal{K}^2(\bx_j,\cdot)
$
with the composition kernel 
\begin{equation} \label{eq:concKernel}
\mathcal{K}^2(\bx,\by) = K_1\left(\sum_{i=1}^N \sum_{k_2=1}^{d_2} c_{i,k_2} K_2(\bx_i, \bx) \boldsymbol{e}_{k_2}, \sum_{i=1}^N \sum_{k_2=1}^{d_2} c_{i,k_2} K_2(\bx_i, \by) \boldsymbol{e}_{k_2}\right).
\end{equation}
Therefore, instead of considering the infinite-dimensional optimization problem of finding $f_1 \in {\cal H}_1$ and $f_2 \in {\cal H}_2$ that minimize 
$$
J(f_1,f_2) = \sum_{i=1}^N {\cal L}\left(y_i,f_1(f_2(\bx_i))\right) + \Theta_1\left(\| f_1 \|_{{\cal H}_1}^2\right) + \Theta_2\left(\| f_2 \|_{{\cal H}_2}^2\right),
$$ we can restrict ourselves 
to finding the $N + N\cdot d_2$ coefficients $\alpha_j, c_{i,k_2}$ for $i,j=1,\ldots,N$ and $k_2 = 1,\ldots,d_2$. 

Note at this point that the problem of finding these coefficients is highly nonlinear and becomes more complicated for a larger number of layers $L$.
While the corresponding problem of optimizing the outermost coefficients, i.e.~$\alpha_j$ for $j=1,\ldots,N$ in our example, is still convex if the loss ${\cal L}$ 
and the penalty terms $\Theta_1,\Theta_2$ are convex, the optimization of the inner coefficients, i.e.~$c_{i,k_2}$ for $i=1,\ldots,N$ and $k_2 = 1,\ldots,d_2$, is usually not convex anymore 
and can have many local minima. Here, finding a global minimum is an issue because standard (iterative) optimization methods strongly depend on the chosen initial value and 
usually just deliver some local minimum.

If the optimization functional $J$ is smooth, one can rely on a Newton-type minimizer such as BFGS to solve the underlying optimization problem. However, if one deals 
with nonsmooth loss functionals or penalty terms, one should resort to specifically designed stochastic gradient algorithms which fit the problem at hand, see e.g.~\cite{SmolaNIPS2016}.

It remains to note that our representer theorem covers much more than just interpolation or least-squares regression algorithms. In the same fashion as the 
standard representer theorem in \cite{Smola}, it can directly be applied to more 
involved settings such as regression with a concatenation of support vector machines for instance. To this end, just choose ${\mathcal L}$ to be the $\varepsilon$-insensitive loss function
and $\Theta_1(x) = \ldots = \Theta_L(x) = x$. Furthermore, the choice of the additive penalties $\Theta_1, \ldots, \Theta_L$ in \eqref{eq:generalMinFunct} is 
rather arbitrary and one could think of more complex interactions 
between the penalties for each function $f_l, l = 1,\ldots,L$, as long as the arguments in the proof of Theorem \ref{theo:generalRepTheo} remain valid.

\subsection{An infinite-sample representer theorem for concatenated kernel learning} \label{sec:InfRepTheo}
After deriving the representer theorem \ref{theo:generalRepTheo} for the case of multi-layer kernel approximations, we now extend our results to 
the case of infinitely many samples. This has to be understood in analogy to the results in chapter 5 of \cite{Steinwart}, where such an infinite-sample representer 
theorem is provided for the single-layer case. Although such a result can usually not directly be applied to a practical problem unless the distribution of the data points is known, 
it can serve as a cornerstone for the analysis of robustness with respect to a measure change and can lead to a-priori convergence results, see \cite{Steinwart}. 
We will restrict the loss function to be an $L$-times differentiable Nemitski loss for the following theorem.
For a definition, we refer to \cite{Steinwart} or our appendix, where we define 
an even more general type of {\it Nemitski vector loss}. Note that, when we refer to convexity or differentiability of Nemitski losses or reproducing kernels, this should always be 
understood with respect to the second argument, i.e.\ $\textrm{d} K(\bx,\bz)$ should be understood as $\frac{\partial}{\partial \bz} K(\bx,\bz)$. 
In the following, we denote by ${\cal B}(X,Y)$ the space of bounded linear operators from $X$ to $Y$, endowed with the standard operator norm.

\begin{theorem} \label{theo:RepTheoInfiniteSample}
 Let ${\cal H}_1, \ldots, {\cal H}_L$ and the domains and ranges of their elements be as in theorem \ref{theo:generalRepTheo} and 
 let $\lambda_1, \ldots, \lambda_L > 0$. Let, furthermore, the kernel $K_l$ of ${\cal H}_l$ fulfill $K_l \in C^{1}(D_l \times D_l)$ 
 together with
 \begin{align} \label{eq:Kernelprereq}
  \sup_{\bx \in D_l} \| K_l(\bx,\bx) \|_2 \leq c_l ~~~ \text{ and } ~~~ \sup_{\bx,\bz \in D_l} \| \mathrm{d} K_l(\bx,\bz) \|_{{\cal B}\left(D_l,\R^{d_l \times d_l}\right)} \leq c_l
 \end{align}
 for some $c_l < \infty$ and all $l=1,\ldots,L$.
  Let $\PP$ be a distribution on $\Omega \times R_1$ and let ${\cal L}: R_1 \times \R \to [0,\infty)$ be a convex, $\PP$-integrable and 
 $1$-times differentiable (w.r.t.\ the second variable) Nemitski loss such that the absolute value of the derivative is also a $\PP$-integrable 
 Nemitski loss, which fulfills
\begin{align*}
\left| {\cal L}^{(k)}(y,z) \right| \leq b_k(y)  +  h_k(|z|) & ~ \text{ for all } (y,z) \in R_1 \times \R
\end{align*}
for some $L_{1,\PP_{R_1}}$-integrable\footnote{Here, $\PP_{R_1}$ denotes the marginal distribution of $\PP$ w.r.t.\ the second variable.} $b_k: R_1 \to [0,\infty)$ and some increasing $h_k: [0,\infty) \to [0,\infty)$ for $k=0,1$.
 Then, if we assume that a set of minimizers $\left(f_l\right)_{l=1}^L$ with $f_l \in {\cal H}_l$ of 
 \begin{equation} \label{eq:MinFunctInfiniteSample}
 J(f_1,\ldots,f_L) := \int_{\Omega \times R_1} {\cal L}\left(y,f_1 \circ \ldots \circ f_L(\bx)\right) \ \mathrm{d} \PP(\bx,y) + 
 \sum_{l=1}^L \lambda_l \| f_l \|^2_{{\cal H}_l}
 \end{equation}
exists, it fulfills the Bochner-type integral equation
\begin{equation} \label{eq:RepresenterIntegral}
 f_{l}(\cdot) = - \frac{1}{2 \lambda_i} \int_{\Omega \times R_1} K_l\left(\cdot, f_{l+1} \circ \ldots \circ f_L(\bx)\right) A_{f_l,f_{l+1},\ldots,f_{L}}(\bx,y) \ \mathrm{d} \PP(\bx,y)
\end{equation}
for some $A_{f_l,f_{l+1},\ldots,f_{L}} \in L_{1,\PP}(\Omega \times R_1;R_l)$ for all $l =1,\ldots,L$.
\end{theorem}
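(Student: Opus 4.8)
The plan is to characterize the minimizer through a first-order optimality condition: I would compute the Gâteaux derivative of $J$ from \eqref{eq:MinFunctInfiniteSample} in each argument $f_l$, set it to zero, and recognize the resulting identity as a Bochner integral equation in ${\cal H}_l$. Throughout I write $\bz_l(\bx):=f_{l+1}\circ\ldots\circ f_L(\bx)$ for the inner chain feeding into $f_l$ and $\Phi_l:=f_1\circ\ldots\circ f_{l-1}$ (with $\Phi_1=\Id$) for the outer chain, so that the full composition is $\Phi_l\circ f_l\circ\bz_l$.

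First I would extract the regularity encoded in \eqref{eq:Kernelprereq}. Since $K_l\in C^1(D_l\times D_l)$ with $K_l$ and $\mathrm{d}K_l$ uniformly bounded, every $f_l\in{\cal H}_l$ is uniformly bounded (by a multiple of $\|f_l\|_{{\cal H}_l}$ depending on $c_l$) and continuously differentiable with a uniformly bounded Jacobian. These two facts are the workhorses of the argument: the first keeps every composition $f_1\circ\ldots\circ f_L$ in a bounded set, and the second makes the chain rule applicable and the composite outer Jacobian $J_l(\bx):=D\Phi_l\big|_{f_l(\bz_l(\bx))}\in\R^{1\times d_l}$ uniformly bounded in $\bx$.

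Next I would differentiate. Perturbing $f_l\mapsto f_l+tg$ with $g\in{\cal H}_l$, the regularizer contributes $2\lambda_l\langle f_l,g\rangle_{{\cal H}_l}$, while the chain rule gives for the loss term the directional derivative ${\cal L}'(y,f_1\circ\ldots\circ f_L(\bx))\,J_l(\bx)\,g(\bz_l(\bx))$. Differentiation under the integral sign is justified by dominated convergence: by the mean value theorem the difference quotient is controlled by $|{\cal L}'|\leq b_1(y)+h_1(|z|)$, and since the argument stays in a bounded set (by the uniform boundedness of the compositions) while $|g(\bz_l(\bx))|$ is uniformly bounded, the $L_{1,\PP_{R_1}}$-integrability of $b_1$ and the monotonicity of $h_1$ furnish a $\PP$-integrable dominating function. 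Rewriting the point evaluation through the vector-valued reproducing property $\boldsymbol{v}^T g(\bz)=\langle g,K_l(\cdot,\bz)\boldsymbol{v}\rangle_{{\cal H}_l}$ turns the loss contribution into $\int\langle g,\,{\cal L}'(y,\cdot)\,K_l(\cdot,\bz_l(\bx))\,J_l(\bx)^T\rangle_{{\cal H}_l}\,\mathrm{d}\PP$.

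I would then pull the inner product out of the integral using the theory of Bochner integrals. This is legitimate once the ${\cal H}_l$-valued integrand $(\bx,y)\mapsto{\cal L}'(\cdot)\,K_l(\cdot,\bz_l(\bx))\,J_l(\bx)^T$ is shown to be Bochner integrable, which follows from $\|K_l(\cdot,\bz)\boldsymbol{v}\|_{{\cal H}_l}\leq\sqrt{c_l}\,\|\boldsymbol{v}\|$, the uniform bound on $\|J_l(\bx)\|$, and the integrability of $|{\cal L}'|$; setting $A_{f_l,\ldots,f_L}(\bx,y):={\cal L}'(y,f_1\circ\ldots\circ f_L(\bx))\,J_l(\bx)^T\in\R^{d_l}$ then both names the integrand and certifies $A_{f_l,\ldots,f_L}\in L_{1,\PP}(\Omega\times R_1;R_l)$. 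Since the total derivative vanishes at a minimizer for every $g\in{\cal H}_l$, a Riesz-type argument yields $f_l=-\tfrac{1}{2\lambda_l}\int K_l(\cdot,\bz_l(\bx))\,A_{f_l,\ldots,f_L}(\bx,y)\,\mathrm{d}\PP$, which is exactly \eqref{eq:RepresenterIntegral}. The hard part is precisely the pair of analytic justifications just flagged — interchanging differentiation with the integral and establishing Bochner integrability of the ${\cal H}_l$-valued integrand — and both succeed only because the Nemitski control of ${\cal L}$ and ${\cal L}'$ combines with the uniform bounds \eqref{eq:Kernelprereq}, propagated through the composition, to dominate every quantity in sight.
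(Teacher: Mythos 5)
Your overall skeleton---first-order optimality in each layer, differentiation under the integral via dominated convergence, the vector-valued reproducing property, and Bochner integrability to identify $A_{f_l,\ldots,f_L}={\cal L}^{(1)}(y,\cdot)\cdot(\text{outer-chain Jacobian})^T$---is the same computation the paper performs; the paper merely packages it by absorbing the inner functions into a pushforward measure $G_{l,\star}(\PP)$ and the outer functions into a modified loss $\tilde{\cal L}_l(y,\bz)={\cal L}(y,f_1\circ\ldots\circ f_{l-1}(\bz))$, so that each layer becomes a single-layer problem treated by a vector-valued analogue of Steinwart's Theorem 5.8 (Lemma \ref{lemma:repLemma}). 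The genuine gap is in your ``workhorse'' step: you claim that \eqref{eq:Kernelprereq} makes \emph{every} $f\in{\cal H}_l$ continuously differentiable with $\sup_{\bx}\|\mathrm{d}f(\bx)\|\lesssim\|f\|_{{\cal H}_l}$. This does not follow from the stated hypotheses. The sup-norm bound does follow from the trace bound $\sup_{\bx}\|K_l(\bx,\bx)\|_2\le c_l$, but a Jacobian bound for generic RKHS elements requires control of the \emph{mixed} second derivative of the kernel: formally $\mathrm{d}f(\bx)$ is the pairing of $f$ with the ${\cal H}_l$-valued derivative of $\bx\mapsto K_l(\cdot,\bx)$, whose operator norm is governed by $\partial_1\partial_2 K_l(\bx,\bx)$, whereas \eqref{eq:Kernelprereq} bounds only $K_l$ and its first derivative in one argument (in Steinwart--Christmann, embedding of $H$ into $C^k$ with norm bounds needs a $C^{2k}$ kernel, i.e.\ $C^2$ for $k=1$). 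Consequently both of the analytic justifications you flag as ``the hard part''---the dominating function for differentiating under the integral, and the certificate $A\in L_{1,\PP}$---rest on a uniform Jacobian bound for the outer chain $\Phi_l$ that you have not established.

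This is precisely why the paper's proof is layer-wise and inductive rather than symmetric in $l$: it processes $l=1,2,\ldots,L$ from the outside in, and at stage $l$ it bounds $\sup_{\bx}\|\mathrm{d}f_i(\bx)\|$ for $i<l$ \emph{not} as a property of arbitrary elements of ${\cal H}_i$, but by differentiating the already-established integral representation \eqref{eq:RepresenterIntegral} of $f_i$ under the Bochner integral, which yields $\sup_{\bx}\|\mathrm{d}f_i(\bx)\|\le\frac{1}{2\lambda_i}c_i\|A_{f_i,\ldots,f_L}\|_{L_{1,\PP}}$ using only the first-derivative bound in \eqref{eq:Kernelprereq} (see the proof of Lemma \ref{lemma:IsNemitski}). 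Your argument becomes correct if you replace the blanket regularity claim by this bootstrapping: establish layer $1$ first (scalar case, Steinwart's theorem applies directly), then feed the resulting derivative bounds of the minimizers into the Nemitski-type control of the loss at layer $l$, exactly as in Lemmas \ref{lemma:repLemma} and \ref{lemma:IsNemitski}. Alternatively one could add a hypothesis bounding $\partial_1\partial_2 K_l$, but that would prove a theorem with strictly stronger assumptions than the one stated.
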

\begin{proof}
 The proof works layer-wise and it is an extension of the proof of theorem 5.8 of \cite{Steinwart}
 to the multi-layer case and to Nemitski vector loss functions, see also definition \ref{def:NemVec}.
 Let $g_i \in {\cal H}_i$ be arbitrary for all $i=1,\ldots,L$.
 Let $G_1 : \Omega \times R_1 \to R_2 \times R_1$ be defined by $G_1(\bx,y) = (g_2 \circ \ldots \circ g_L(\bx), y)$. Obviously, $G_1$ is a measurable map 
 and we can define the pushforward $G_{1,\star}(\PP)$ of $\PP$ onto $R_2 \times R_1$. With this we obtain 
 $$
 \int_{\Omega \times R_1} {\cal L}\left(y,g_1 \circ \ldots \circ g_L(\bx)\right) \ \mathrm{d} \PP(\bx,y) = \int_{R_2 \times R_1} {\cal L} \left(y, g_1(\bxi)\right) 
 \ \mathrm{d} G_{1,\star}(\PP)(\bxi,y).
 $$
 Now, with the functional $J_{g_2,\ldots,g_L} : {\cal H}_1 \to [0,\infty)$ defined by
 $$
 J_{g_2,\ldots,g_L}(g_1) := \int_{R_2 \times R_1} {\cal L} \left(y, g_1(\bxi)\right) \ \mathrm{d} G_{1,\star}(\PP)(\bxi,y) + \lambda_1 \| g_1 \|_{{\cal H}_1}^2,
 $$
 we can reformulate the minimization problem as
 $$
 \min_{g_1 \in {\cal H}_1,\ldots,g_L \in {\cal H}_L} J(g_1,\ldots,g_L) = \min_{g_2 \in {\cal H}_2,\ldots,g_L \in {\cal H}_L} \left( \min_{g_1 \in {\cal H}_1} J_{g_2,\ldots,g_L}(g_1) \right) + \sum_{l=2}^L \lambda_l \| g_l \|_{{\cal H}_l}^2.
 $$
 Since $G_1$ leaves the second argument unchanged, it directly follows from the $\PP$-integrability that ${\cal L}$ is also a $G_{1,\star}(\PP)$-integrable Nemitski loss. 
 Therefore, the application of the infinite-sample representer theorem 5.8 in \cite{Steinwart} states that the minimizer $g_1^{\star}$ of $J_{g_2,\ldots,g_L}$ 
 can be written as
 \begin{align*}
 g_1^{\star}(\cdot) = & - \frac{1}{2 \lambda_1} \int_{R_2 \times R_1} {\cal L}^{(1)}\left(y, g_1^{\star}(\bxi)\right) K_1(\cdot, \bxi) \ \mathrm{d} G_{1,\star}(\PP)(\bxi,y) \\
 = & - \frac{1}{2 \lambda_1} \int_{\Omega \times R_1} {\cal L}^{(1)}\left(y, g_1^{\star} \circ g_2 \circ \ldots \circ g_L(\bx) \right) K_1(\cdot, g_2 \circ \ldots \circ g_L(\bx)) \ \mathrm{d} \PP(\bx,y),
 \end{align*}
 where ${\cal L}^{(1)}$ denotes the first derivative of ${\cal L}$ w.r.t.\ the second argument. For the choice $g_i = f_i$ for $i=2,\ldots,L$,
 we obtain the minimizer $f_1 = g_1^{\star}$. Note that $f_1$ is continuous and $\| f_1 \|_{\infty} := \sup_{\bx \in D_1} |f_1(\bx)| < \infty$
 since ${\cal H}_1 \hookrightarrow C(D_1)$ follows directly by \eqref{eq:Kernelprereq}.
 Therefore, \eqref{eq:RepresenterIntegral} is true for $l=1$ since 
 \begin{align*}
 \left|A_{f_1,\ldots,f_{L}}(\cdot)\right| := & \ \left|{\cal L}^{(1)}\left(y, f_1 \circ f_2 \circ \ldots \circ f_L(\cdot) \right) \right|
 \leq b_1(y)  +  h_1\left(\left|f_1 \circ f_2 \circ \ldots \circ f_L(\cdot)\right|\right)  \\
 \leq & \ b_1(y) + h_1\left(\| f_1 \|_{\infty}\right)
 \end{align*}
 is in $L_{1,\PP}$ since $b_1 \in L_{1,\PP_{R_1}}(R_1)$.
 
 To tackle the next layer, we define $\tilde{\cal L} : R_1 \times R_2 \to [0,\infty)$ by 
 $$
 \tilde{\cal L}(y,\bz) := {\cal L}(y,f_1(\bz)).
 $$
 We proceed by showing that $\tilde{\cal L}$ is a $\PP$-integrable and $1$-times differentiable Nemitski vector loss. Then we show that we can use analogous techniques as 
 in \cite{Steinwart} - but for vector-valued functions - to ensure the representation \eqref{eq:RepresenterIntegral} for $l=2$.
 These arguments can then be iterated until we reach the innermost layer 
 and the proof is completed. Since the details are quite technical, we outsourced them into appendix \ref{appendix}.
\end{proof}

Theorem \ref{theo:RepTheoInfiniteSample} states that the solution $f_l$ in the $l$-th layer of \eqref{eq:MinFunctInfiniteSample} is an element of the range of the integral operator 
defined by the kernel $K_l\left( \cdot, f_{l+1} \circ \ldots \circ f_L(\cdot) \right): D_l \times \Omega \to \R^{d_{l} \times d_{l}}$. 
Note that the statement of theorem \ref{theo:generalRepTheo} can be derived by choosing a sum of finitely many Dirac measures $\delta_{\bx_i,y_i}$ as $\PP$ in theorem \ref{theo:RepTheoInfiniteSample}.
In this special case, the result boils down to $f_l$ being in the span of the kernel evaluations in the data points.

Note furthermore that - in contrast to the finite sample case - $f_l$ is defined as a convolution with the asymmetric kernel in \eqref{eq:RepresenterIntegral}.
This can be interpreted as a smoothing step for many kernel choices.
In this sense, we can expect the solutions $f_l$ of \eqref{eq:MinFunctInfiniteSample} to employ a higher degree of smoothness 
than in the case of \eqref{eq:generalMinFunct}, where the solutions are only finite linear combinations of kernels. However, this of course comes at the cost of 
the regularity condition on the kernels in the requirements of theorem \ref{theo:RepTheoInfiniteSample}.

\subsection{Relation to neural networks and deep learning}
\label{subsec:NNconnection}

We now come back to the finite sample case in this section and
discuss the relation of our representer theorem \ref{theo:generalRepTheo} to two of the most common approaches in deep learning with kernels, namely
multi-layer multiple kernel learning (MLMKL) and deep kernel networks (DKN),
see e.g.~\cite{ChoSaul1,LawrenceDeepGauss,Rebai,Strobl,Wilson,Zhuang}.
For reasons of simplicity, we restrict ourselves to the two-layer case $L=2$ here. 

\subsubsection{Relation to hidden layer neural networks}

Let us first illustrate how our approach can be encoded as a hidden layer feed-forward neural network. 
The idea behind artificial neural networks is the same as for multi-layer kernel learning, namely using concatenations of functions to compute good approximations. 
More precisely, the so-called universal approximation theorem states that already a two-layer neural network can approximate any continuous function arbitrarily well,
see \cite{Cybenko,Hornik}.
For more details on artificial neural networks and deep learning, we refer the reader
to \cite{BengioDeep}. 

As mentioned in the two-layer case above, we are aiming to find a function $h(\cdot) = f_1 \circ f_2(\cdot) = 
\sum_{j=1}^N \alpha_j \mathcal{K}^2(\bx_j,\cdot)$ with $f_1 \in {\cal H}_1$ and $f_2 \in {\cal H}_2$ and associated $K_1$ and $K_2$, respectively, 
where the kernel ${\mathcal{K}^2}$ is given by \eqref{eq:concKernel}.
The construction of $h$ can be easily encoded as a feed-forward neural network with one hidden layer if $K_1$ is a radial basis function (RBF) kernel for instance\footnote{For many other types of kernels, e.g.~tensor products of RBF kernels, one can still 
construct a more complex Sigma-Pi neural network for the computation of the output values.}.
We illustrate\footnote{Note that we only choose $d_2=1$ for illustrative reasons. For $d_2 > 1$, a neural network can be
built analogously with an additional hidden layer to compute the norm 
of the difference of $d_2$-dimensional vectors. However, this additional layer, which just computes $\| \bx - \by \|_2$ for given $\bx$ and $\by$,
has fixed weights and does not play any role for the optimization of the neural network.}
the case $d_2=1$ with an RBF kernel $K_1(z_1,z_2) = a(|z_1 - z_2|)$ for some function $a : \R \to \R$ in Figure \ref{fig:NN}. The first layer is 
split into the input layer with values $K_2(\bx_i,\bx)$ for $i=1,\ldots,N$ and an artificial ``always on'' layer with neuron-clusters that 
supply the constant values $K_2(\bx_i,\bx_j)$ with weights $-c_j$ for $i,j=1,\ldots,N$. Note that the $i$-th cluster $K_2(\bx_i, \bx_j)$ 
of the ``always on'' layer is only connected to the $i$-th neuron of the hidden layer. Note furthermore that the inputs $K_2(\bx_i,\bx)$ can also easily be computed 
by a neural network with fixed weights if $K_2$ is a radial basis kernel.
If we consider a ``deeper'' concatenation, we would need a deeper neural network with additional layers, i.e.~for $f_1 \circ \ldots \circ f_L$, we 
need $L-1$ hidden layers.

\begin{figure}[tb]
\def\layersep{3.5cm}
 \centering
\includegraphics{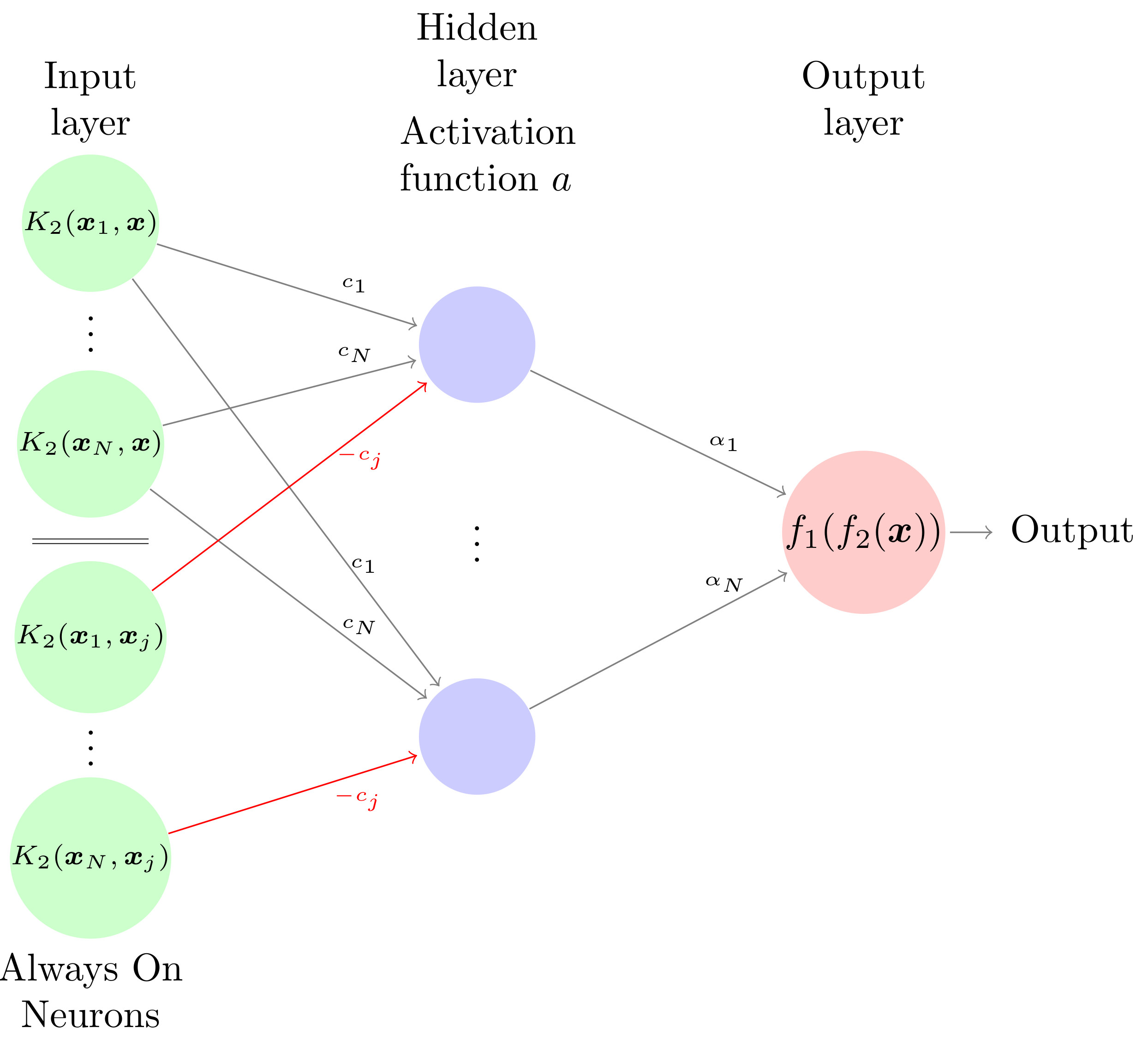}
\caption{A hidden layer, feed-forward neural network to simulate the concatenation of two functions $f_1$ and $f_2$ from reproducing kernel Hilbert spaces. 
For reasons of readability, we choose $d_2=1$ and write $c_i := c_{i,1}$.
The outer kernel is $K_1(z_1,z_2) = a(| z_1 - z_2|)$. Note that the $i$-th artificial ``always on'' neuron-cluster in the lower half of 
the first layer is 
written as $K_2(\bx_i,\bx_j)$, which stands for $N$ single neurons with values $K_2(\bx_i,\bx_1), \ldots, 
K_2(\bx_i,\bx_N)$. The cluster $K_2(\bx_i,\bx_j)$ is only connected to the $i$-th neuron of the hidden layer with weights $-c_j$ (red lines). This means that 
the value $\sum_{j=1}^N -c_j K_2(\bx_i,\bx_j)$ is forwarded to the $i$-th neuron of the hidden layer.}
\label{fig:NN}
\end{figure}

\subsubsection{Relation to multi-layer multiple kernel learning}
The common idea in MLMKL methods is to learn a kernel $\tilde{K}$, which consists of a chain of linear combinations of 
functions and an inner kernel, e.g. 
\begin{equation*}
\tilde{K}(\bx,\by) = \sum_{\ell = 1}^{n_1} \nu_{1,\ell} k_{1,\ell}\left(\sum_{i=1}^{n_2} \nu_{2,i} K_{2,i}( \bx, \by) \right)
\end{equation*}
in the two-layer case, where $k_{1,\ell}$ 
are real-valued functions for $\ell=1,\ldots,n_1$ and $K_{2,i}$ are different scalar-valued kernels for $i=1,\ldots,n_2$.
Note that the functions $k_{1,\ell}$ are chosen such that $\tilde{K}$ is still a kernel. In the case of linear $k_{1,\ell}$, \cite{DinuzzoPhd} has shown that 
the resulting algorithm becomes a standard MKL procedure and can be interpreted as a two-layer kernel network with a linear outer kernel.
However, for arbitrary $k_{1,\ell}$ this is not the case and we are dealing with a true MLMKL approach.
The specific MLMKL algorithm then aims to find the optimal values for the coefficients $\nu_{1,\ell}, \nu_{2,i}$ in order to determine the 
best $\tilde{K}$ for a regression of the given data $X$ and $Y$ with e.g.~a support vector regression algorithm.
Note that the kernels and the $k$-functions are usually chosen heuristically, e.g.~as polynomials,
Gaussians, sigmoidals, etc., see \cite{Rebai,Zhuang}. 

To apply our result to the two-layer MKL method above, let us consider the case $n_1 = 1$ and $n_2 = N$. We set $\nu_{1,1} = 1$ without loss of generality.
We let the outer function $k_{1,1}(z) = a(| z |)$ be the radial basis function used for the outer kernel (i.e.~middle layer) in Figure \ref{fig:NN}. Furthermore, we set 
$$
K_{2,i}(\bx,\by) := K_2(\bx_i, \bx) - K_2(\bx_i,\by).
$$
Note that the $K_{2,i}$ are no longer kernels anymore in this setting. However, they are now directly connected to our concatenated function learning approach since 
\begin{align*}
 \tilde{K}(\bx,\by) & = k_{1,1}\left( \sum_{i=1}^N \nu_{2,i} K_{2,i}(\bx,\by) \right) = a\left(\left|\sum_{i=1}^N \nu_{2,i} K_2(\bx_i, \bx) - \nu_{2,i} 
 K_2(\bx_i,\by) \right| \right) \\
 & = K_1\left(\sum_{i=1}^N \nu_{2,i} K_2(\bx_i,\bx), \sum_{i=1}^N \nu_{2,i} K_2(\bx_i,\by) \right) = \mathcal{K}^2\left(\bx,\by\right)
\end{align*}
from \eqref{eq:concKernel} with $c_i = \nu_{2,i}$ and the kernels $K_1$ and $K_2$ used in Figure \ref{fig:NN}. 
Altogether, we thus see that an MLMKL algorithm with these parameters already determines the optimal solution (provided that the right hand side of \eqref{eq:generalRepTheo} is solved exactly)
among all functions of type $h = f_1 \circ f_2$ with $f_1 \in {\cal H}_1$ and $f_2 \in {\cal H}_2$ according to Theorem \ref{theo:generalRepTheo}. 
This way, our representer theorem for concatenated functions directly applies to a special case of MLMKL networks.
Note however that a generalization of our arguments to more layers, i.e.~$L > 2$, is not straightforward for MLMKL.

\subsubsection{Relation to deep kernel learning approaches}
The class of DKN methods consists of algorithms which build a kernel by nonlinearly transforming the input vectors before applying an outer kernel function. 
This is in contrast to the MLMKL approach, where only the innermost function is a two-variate kernel and its evaluations are modified by some nonlinear outer functions. 
The models in this class range from simple feature map powers for some function $\Psi$, 
i.e.
$$\tilde{K}(x,y) = \underbrace{\Psi \circ \ldots \circ \Psi}_{L-1 \text{ times}}(\bx) \cdot \underbrace{\Psi \circ \ldots \circ \Psi}_{L-1 \text{ times}}(\by),$$
see \cite{ChoSaul1}, to more general variants like 
$$
\tilde{K}(x,y) = K\left(f_2 \circ \ldots \circ f_L(\bx),f_2 \circ \ldots \circ f_L(\by)\right)
$$
with nonlinear functions $f_2,\ldots,f_L$, see \cite{Wilson}. If we assume that $f_l \in {\cal H}_l$ for $l=2,\ldots,L$ stem from reproducing kernel Hilbert spaces 
with associated kernels $K_l$, 
we can apply Theorem \ref{theo:generalRepTheo} to this approach and obtain that each $f_l$ can be written as a finite linear combination of evaluations 
of the kernel $K_l$. Thus, we can directly apply our representer theorem for $L$-layer DKN algorithms.

\subsection{The two-layer interpolation problem}
\label{subsec:ConcInterpolation}

After analyzing the general multi-layer kernel concatenation problem in Theorem \ref{theo:generalRepTheo}, we now have a closer, more detailed 
look at the main component of it, namely the {\em concatenation of two functions}.
To this end, we specifically consider the interpolation problem for $L=2$. 
This simple, illustrative setting gives further insights into the way concatenation works in machine learning problems.

\subsubsection{Definition of the problem}

We slightly adapt our notation to this special case to obtain a direct relation to the single-layer interpolation problem from 
Section \ref{sec:IntRKHS}.
To this end, let $D:=d_2$ and consider the domain 
$\Phi := D_1 \subseteq \R^{D}$ together with the two function spaces
\begin{align*}
\outerRKHS := {\cal H}_1 &\subset C(\Phi):=\left\{f:\Phi \to \R \ | \ f \text{  continuous} \right\} &  \text{-- ``outer'' space,}\\
\innerRKHS := {\cal H}_2 &\subset \left\{ \boldsymbol{g}=\left(g_{1},\dots,g_{D}\right)^{T} :\Omega \to \Phi \ | \ 
\boldsymbol{g} \text{  continuous} \right\} & ~~~~~~~~ \text{-- ``inner'' space}.
\end{align*} 
Both spaces are supposed to be reproducing kernel Hilbert spaces, i.e.~there is an (outer) kernel $K := K_1:\Phi \times \Phi \to \R$ for $\outerRKHS$ such that
\begin{align*}
K\left(\bx,\cdot \right) &\in \outerRKHS && \text{for all } \bx \in \Omega, \\
f\left( \bx \right)&=\left(f,K\left(\bx,\cdot\right)  \right)_{ \outerRKHS }  && \text{for all } \bx \in \Omega \text{  and all  } f \in   \outerRKHS.
\end{align*}
The function space $\innerRKHS$ is assumed to be a vector-valued RKHS, i.e.~there is an (inner) kernel $\boldsymbol{K}:\Omega \times \Omega \to \R^{D \times D}$ such that
\begin{align*}
\boldsymbol{K}\left(\bx,\cdot \right) \bc &\in  \innerRKHS && \text{for all  } \bx \in \Omega \text{ and all } \bc \in \R^{D}, \\
\bc^T \boldsymbol{g}\left( \bx \right)&= \left(\boldsymbol{g},\boldsymbol{K}\left(\bx,\cdot \right) \bc \right)_{ \innerRKHS   }
&& \text{for all  } \bx \in \Omega \text{, all } \bc \in \R^{D} \text{ and all } \boldsymbol{g} \in  \innerRKHS.
\end{align*}

To formulate the concatenated interpolation problem in the spirit of \eqref{eq:interpolStd}, we have to define an appropriate functional and propose an appropriate 
search set for the minimization task.
To this end, we consider the functional $J: \outerRKHS \times  \innerRKHS \to \R$ given by
\begin{equation*}
J\left( f ,\boldsymbol{g}\right):=\left\|f\right\|^{2}_{ \outerRKHS } + \left\|\boldsymbol{g} \right\|^{2}_{\innerRKHS },
\end{equation*}
which penalizes the norms of both the outer and the inner function,
and the admissible set
\begin{equation*}
\mathcal{A}_{X,Y}:=\left\{\left(f,\boldsymbol{g} \right) \in  \outerRKHS \times  \innerRKHS  
\ | \ f \circ \boldsymbol{g} \left(\bx_{j} \right)=y_{j} \  1 \le j \le N \right\} \subset  \outerRKHS \times  \innerRKHS,
\end{equation*}
i.e.~the set of all concatenations of functions from $\outerRKHS$ and $\innerRKHS$ which interpolate the data.
With this notation, we can define the following variational optimization problem
\begin{equation*} \label{eq:interpolConc} \tag{P}
J\left(f,\boldsymbol{g}\right) \rightarrow \min  ~ \text{ for } \left(f,\boldsymbol{g}\right) \in  \mathcal{A}_{X,Y}
\end{equation*}

As explained in Section \ref{sec:IntRKHS}, the solution $f^*_{X,Y}$ to the standard interpolation problem \eqref{eq:interpolStd} can be computed by solving 
the system \eqref{eq:LGSRepresenterTheoremInt} of linear equations for a given set of fixed and pairwise disjoint input data points $X:=
\left\{\bx_{1},\dots,\bx_{N} \right\}$. Therefore, if we assume for a moment the inner function $\boldsymbol{g}$ in \eqref{eq:interpolConc} to be fixed and 
$Z := \boldsymbol{g}(X) = \left\{\bz_i = \boldsymbol{g}(\bx_i) \mid i = 1,\ldots,N \right\}$, then we obtain that the solution $f^*_{Z,Y}$ to \eqref{eq:interpolStd}
with data points $Z$ 
is the only admissible minimizer of the concatenated interpolation problem \eqref{eq:interpolConc}, i.e. 
\begin{equation*}
 f^*_{Z,Y} = \argmin{f \in \{ h \in \outerRKHS \mid (h, \boldsymbol{g}) \in {\mathcal{A}_{X,Y}}\}} \| f \|_{\outerRKHS}^2.
\end{equation*}
Note that the coefficients $\boldsymbol{\alpha}^* \in \R^N$ of $f^*_{Z,Y} = \sum_{i=1}^{N} \alpha^*_i K(\bz_i, \cdot)$ can be computed by solving the system 
$$
\boldsymbol{M}_{Z,Z} \boldsymbol{\alpha}^* = \boldsymbol{y} 
$$
and the value of the optimal energy, i.e.~the squared norm, is given by
\begin{equation*}
 \left\| f^*_{Z,Y} \right\|^{2}_{ \outerRKHS } = {\boldsymbol{\alpha}^*}^{T} \boldsymbol{M}_{Z,Z} \boldsymbol{\alpha}^* = 
 \boldsymbol{y}^{T}   \boldsymbol{M}^{-1}_{Z,Z} \boldsymbol{y}.  
\end{equation*}

\subsubsection{Application of the representer theorem}

In order to rewrite the concatenated interpolation problem \eqref{eq:interpolConc} into an unconstrained minimization problem by applying the above result, 
 we first have to 
 discuss what happens if $\boldsymbol{g}\left(\bx_{j}\right)=\boldsymbol{g}\left(\bx_{k}\right)$ for two indices $j \neq k$.
If equality holds also for the corresponding values from $Y$, i.e.~$y_{j}=y_{k}$, we can simply remove the pair $(x_j,y_j) \in X \times Y$ from the input data
and with it also the corresponding condition from the admissible set.
However, if $y_{j}\neq y_{k}$, there cannot be an $f \in \outerRKHS$ such that $\left(f,\boldsymbol{g} \right)\in \mathcal{A}_{X,Y}$.
In this case, we simply set $J\left(f,\boldsymbol{g}\right)=\infty$. 
Using this convention,
we can recast \eqref{eq:interpolConc} into the unrestricted optimization problem
\begin{equation*} \label{eq:interpolConcUnrestricted} \tag{uP}
J\left(f^*_{\boldsymbol{g}\left(X \right),Y},\boldsymbol{g} \right)= \boldsymbol{y}^{T} \boldsymbol{M}^{-1}_{\boldsymbol{g}\left(X \right),
\boldsymbol{g}\left(X \right)} \boldsymbol{y} + \left\|\boldsymbol{g} \right\|^{2}_{\innerRKHS } \rightarrow \min  \quad \text{for } \boldsymbol{g} \in \innerRKHS.
\end{equation*}
Therefore, we only have to consider the minimization with respect to $\boldsymbol{g} \in \innerRKHS$ since the optimal outer function $f^*_{\boldsymbol{g}(X),Y}$ is completely 
determined by the inner function values $\boldsymbol{g}(X)$ and $Y$.

Note that the side condition $\boldsymbol{g}\left(\bx_{j}\right) \neq \boldsymbol{g}\left(\bx_{k}\right)$ for $j \neq k$ can also be enforced by adding a penalty term of type
$
\sum_{i < j} W\left(\| \boldsymbol{g}(\bx_i) - \boldsymbol{g}(\bx_j) \|^2_2\right)
$
to $J$, where $W$ is a smooth function with $W(0) = \infty$, e.g.~$W(x) = \coth(x)$.
This can also remedy the problem of small condition numbers of $\boldsymbol{M}_{\boldsymbol{g}\left(X \right),
\boldsymbol{g}\left(X \right)}$ for large sample sizes since it maximizes distances between the point evaluations of $\boldsymbol{g}$.
Adding this to \eqref{eq:interpolConcUnrestricted}, we obtain
\begin{align} \label{eq:interpolConcUnrestrictedWithPenalty}
 J_{\gamma}\left(f^*_{\boldsymbol{g}\left(X \right),Y},\boldsymbol{g} \right) :=  J\left(f^*_{\boldsymbol{g}\left(X \right),Y},\boldsymbol{g} \right) + & \, \gamma 
 \sum_{1 \leq i < j \leq N} \coth\left(\left\| \boldsymbol{g}(\bx_i) - \boldsymbol{g}(\bx_j) \right\|^2_2 \right) \\ 
 & \quad \quad \rightarrow \min \quad \text{for } \boldsymbol{g} \in \innerRKHS. \nonumber
\end{align}
However, since using $J_{0} = J$ in our experiments in Section \ref{sec:Experiments} works out already well and the side condition does not seem to affect 
the results for moderate sample sizes, we restrict ourselves to the problem \eqref{eq:interpolConcUnrestricted} in the following.

Although the above considerations seem to simplify the concatenated interpolation problem, we still have to solve a highly nonlinear optimization problem 
over the (possibly) infinite-dimensional RKHS $\innerRKHS$. 
Nonetheless, by applying Theorem \ref{theo:generalRepTheo} to the unrestricted concatenated interpolation problem \eqref{eq:interpolConcUnrestricted}, we can restrict 
the search space $\innerRKHS$ to the span of the kernel translates in the input data.

\begin{corollary} \label{corr:representerInt}
 Let $V_X := \operatorname{span}\{\boldsymbol{K}(\bx_i,\cdot) \boldsymbol{e}_j \mid i=1,\ldots,N \text{ and } j=1,\ldots,D\}$, where $\boldsymbol{e}_j$ 
 denotes the $j$-th unit vector in $\R^D$. 
 Then, the solution $\boldsymbol{g}^*$ to the unconstrained concatenated interpolation problem \eqref{eq:interpolConcUnrestricted} fulfills $\boldsymbol{g}^* \in V_X$.
\end{corollary}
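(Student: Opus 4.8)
The plan is to apply the same orthogonal-projection argument that drives the proof of Theorem \ref{theo:generalRepTheo}, now to the single inner function $\boldsymbol{g}$, after observing that the reduced objective in \eqref{eq:interpolConcUnrestricted} depends on $\boldsymbol{g}$ only through the finitely many point evaluations $\boldsymbol{g}(\bx_1),\ldots,\boldsymbol{g}(\bx_N)$ and through the norm $\|\boldsymbol{g}\|^2_{\innerRKHS}$. Indeed, the entries of $\boldsymbol{M}_{\boldsymbol{g}(X),\boldsymbol{g}(X)}$ are $K(\boldsymbol{g}(\bx_i),\boldsymbol{g}(\bx_j))$, so the data term $\boldsymbol{y}^T\boldsymbol{M}^{-1}_{\boldsymbol{g}(X),\boldsymbol{g}(X)}\boldsymbol{y}$ --- and likewise the $J=\infty$ convention used for coincident points --- is a function of these evaluations alone.

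First I would split $\boldsymbol{g} = \Pi_{V_X}(\boldsymbol{g}) + \Pi_{V_X^{\perp}}(\boldsymbol{g})$ into its orthogonal projection onto $V_X$ and the complementary part in $\innerRKHS$. Using the reproducing property of the vector-valued kernel $\boldsymbol{K}$ with $\bc = \boldsymbol{e}_j$ gives
$$g_j(\bx_i) = \left(\boldsymbol{g},\boldsymbol{K}(\bx_i,\cdot)\boldsymbol{e}_j\right)_{\innerRKHS} = \left(\Pi_{V_X}(\boldsymbol{g}),\boldsymbol{K}(\bx_i,\cdot)\boldsymbol{e}_j\right)_{\innerRKHS},$$
since $\boldsymbol{K}(\bx_i,\cdot)\boldsymbol{e}_j \in V_X$ is orthogonal to $\Pi_{V_X^{\perp}}(\boldsymbol{g})$. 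As this holds for all $i=1,\ldots,N$ and $j=1,\ldots,D$, the point evaluations, and hence the whole matrix $\boldsymbol{M}_{\boldsymbol{g}(X),\boldsymbol{g}(X)}$, are unchanged when $\boldsymbol{g}$ is replaced by $\Pi_{V_X}(\boldsymbol{g})$; consequently the data term of \eqref{eq:interpolConcUnrestricted} is invariant under this replacement.

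For the regularizer I would invoke orthogonality, $\|\boldsymbol{g}\|^2_{\innerRKHS} = \|\Pi_{V_X}(\boldsymbol{g})\|^2_{\innerRKHS} + \|\Pi_{V_X^{\perp}}(\boldsymbol{g})\|^2_{\innerRKHS}$, which exceeds $\|\Pi_{V_X}(\boldsymbol{g})\|^2_{\innerRKHS}$ unless $\Pi_{V_X^{\perp}}(\boldsymbol{g}) = 0$. Combining the two facts, passing from a minimizer $\boldsymbol{g}^*$ to $\Pi_{V_X}(\boldsymbol{g}^*)$ leaves the data term fixed and does not increase the regularizer, strictly decreasing the objective whenever $\Pi_{V_X^{\perp}}(\boldsymbol{g}^*) \neq 0$. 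By minimality this forces $\Pi_{V_X^{\perp}}(\boldsymbol{g}^*) = 0$, i.e.\ $\boldsymbol{g}^* \in V_X$. This is exactly the specialization of Theorem \ref{theo:generalRepTheo} to one inner layer with the strictly monotone regularizer $\Theta(\cdot) = \mathrm{id}(\cdot)$.

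The only step requiring genuine care --- and hence the main, if mild, obstacle --- is justifying that the reduced objective depends on $\boldsymbol{g}$ solely through its point evaluations, so that no hidden dependence on $\boldsymbol{g}$ survives through the eliminated outer coefficients $\boldsymbol{\alpha}^*$; this is settled by the closed form $\|f^*_{Z,Y}\|^2_{\outerRKHS} = \boldsymbol{y}^T\boldsymbol{M}^{-1}_{Z,Z}\boldsymbol{y}$ with $Z = \boldsymbol{g}(X)$ established above, together with the consistent treatment of coincident points $\boldsymbol{g}(\bx_j) = \boldsymbol{g}(\bx_k)$ via the $J = \infty$ convention. Everything else is a routine verbatim transcription of the projection argument from the proof of Theorem \ref{theo:generalRepTheo}.
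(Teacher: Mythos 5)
Your proof is correct, but it takes a somewhat different route from the paper's. The paper proves this corollary in one line: it applies Theorem \ref{theo:generalRepTheo} with $L=2$, $\Theta_1(x)=\Theta_2(x)=x$, and the $0/\infty$ interpolation loss (${\cal L}(y_i,t)=0$ if $t=y_i$, $\infty$ otherwise), i.e.\ it works on the \emph{joint} functional $J(f,\boldsymbol{g})$ over $\outerRKHS\times\innerRKHS$ and reads off the conclusion for the inner component of a joint minimizer. You instead re-run the projection argument that underlies Theorem \ref{theo:generalRepTheo} directly on the \emph{reduced} functional of \eqref{eq:interpolConcUnrestricted}, after observing that $\boldsymbol{y}^{T}\boldsymbol{M}^{-1}_{\boldsymbol{g}(X),\boldsymbol{g}(X)}\boldsymbol{y}$ (and the $J=\infty$ convention for coincident points) depend on $\boldsymbol{g}$ only through the evaluations $\boldsymbol{g}(\bx_i)$, which the vector-valued reproducing property shows are invariant under $\boldsymbol{g}\mapsto\Pi_{V_X}(\boldsymbol{g})$; orthogonality of the norm then forces a minimizer into $V_X$. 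Both arguments are sound. The paper's route buys brevity and reuse of the general theorem (note it tacitly relies on the theorem admitting losses with values in $[0,\infty]$, which it does). Your route buys self-containedness and proves exactly the statement as phrased, about minimizers of \eqref{eq:interpolConcUnrestricted}: the paper's citation of Theorem \ref{theo:generalRepTheo} implicitly needs the additional (easy, but unstated) observation that a minimizer $\boldsymbol{g}^*$ of the reduced problem, paired with $f^*_{\boldsymbol{g}^*(X),Y}$, is a minimizer of the joint problem, whereas your elimination-based argument makes this bookkeeping explicit through the closed form $\|f^*_{Z,Y}\|^2_{\outerRKHS}=\boldsymbol{y}^{T}\boldsymbol{M}^{-1}_{Z,Z}\boldsymbol{y}$ with $Z=\boldsymbol{g}(X)$.
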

\begin{proof}
 We apply Theorem \ref{theo:generalRepTheo} with $L = 2$, $\Theta_1(x) = \Theta_2(x) = x$ and
$$
{\cal L}\left(y_i,f \circ \boldsymbol{g}(\bx_i)\right) = \left\{ \begin{array}{cl}
                                                                                                                                             0 & \text{ if } f \circ \boldsymbol{g}(\bx_i) = y_i  \\
                                                                                                                                             \infty & \text{ else,}
                                                                                                                                            \end{array}
\right.$$
which exactly resembles the interpolation problem \eqref{eq:interpolConcUnrestricted}.
\end{proof}

Due to Corollary \ref{corr:representerInt}, we can recast the unrestricted concatenated interpolation problem \eqref{eq:interpolConcUnrestricted} into 
\begin{equation*} \label{eq:interpolConcFiniteDim} \tag{uP-X}
J\left(f^*_{\boldsymbol{g}\left(X \right),Y},\boldsymbol{g} \right) = \boldsymbol{y}^{T} \boldsymbol{M}^{-1}_{\boldsymbol{g}\left(X \right), 
\boldsymbol{g}\left(X \right)} \boldsymbol{y} + \left\|\boldsymbol{g} \right\|^{2}_{\innerRKHS } \rightarrow \min  \text{ for } \boldsymbol{g} \in V_{X} \subset \innerRKHS.
\end{equation*}
This is a nonlinear, finite-dimensional and unrestricted optimization problem. 
We fix the kernel basis 
$\left\{ \boldsymbol{K}\left(\bx_{j}, \cdot \right)\boldsymbol{e}_{\ell}  \mid  (j,\ell) \in \mathcal{I} \right\}$ with 
$\mathcal{I} := \left\{\left(j,\ell \right)\ \in \N^2 \mid 1 \le j \le N, 1 \le \ell \le D \right\}$ 
to solve \eqref{eq:interpolConcFiniteDim}.
Then, the optimal solution can be written as 
\begin{equation} \label{eq:optimalgInt}
\boldsymbol{g}^*(\cdot)=\sum_{(j,\ell) \in \mathcal{I}} c^*_{j,\ell}\boldsymbol{K}\left(\bx_{j},\cdot \right)\boldsymbol{e}_{\ell}.
\end{equation}
In order to express the minimization problem \eqref{eq:interpolConcFiniteDim} with respect to the coefficients $\bc^* = \left(c^*_{1,1},\ldots,c^*_{N,D}\right)^T$,
we introduce
\begin{equation} \label{eq:QXX}
\boldsymbol{Q}_{X,X}\left(\bc \right) = \boldsymbol{M}_{\boldsymbol{g}(X),\boldsymbol{g}(X)} = 
\left(K \left( \sum\limits_{\left(j,\ell\right)\in I} c_{j,\ell} \boldsymbol{K}\left(\bx_{j},\bx_n\right)\boldsymbol{e}_{\ell}, \sum\limits_{\left(j,\ell\right)\in I}
c_{j,\ell} \boldsymbol{K}\left(\bx_{j},\bx_m\right)\boldsymbol{e}_{\ell}\right)\right)_{1\le n,m \le N}
\end{equation}
and the corresponding quadratic form 
\begin{equation*}
\mathcal{Q}:\R^{ND} \to \R, \quad \bc  \mapsto 
\boldsymbol{y}^{T}\boldsymbol{Q}_{X,X}\left(\bc \right)^{-1}
\boldsymbol{y}.
\end{equation*}
Furthermore, to express $\| \boldsymbol{g}^* \|_{\innerRKHS}^2$ with respect to $\bc^*$, we need
\begin{equation} \label{eq:NXX}
\mathcal{N}:\R^{ND} \to \R, \quad \bc \mapsto 
\sum_{j,k=1}^{N} \begin{pmatrix} c_{j,1}\\ \vdots \\ c_{j,D} \end{pmatrix}^{T}\boldsymbol{K}\left(\bx_{j},\bx_{k} \right) \begin{pmatrix} c_{k,1}\\ \vdots \\ c_{k,D} \end{pmatrix}.
\end{equation}
Finally, we obtain the finite-dimensional optimization problem
\begin{equation*}
\label{eq:interpolConcFiniteDimVec} \tag{Int}
\bc^* = \argmin{\bc \in \R^{ND}} \underbrace{\mathcal{Q}\left(\bc \right)}_{\| f^*_{\boldsymbol{g}(X),Y} \|^2_{\outerRKHS}} + \underbrace{\mathcal{N}\left(\bc \right)}_{\| \boldsymbol{g} \|^2_{\innerRKHS}}.
\end{equation*}

\subsubsection{Solving the minimization problem}

The unconstrained problem \eqref{eq:interpolConcFiniteDimVec} is highly nonlinear because the coefficients $c_{j,\ell}$ are transformed by the outer kernel function $K$.
It can be tackled by any suitable iterative optimization algorithm. 
If the kernels $K$ and $\boldsymbol{K}$ are differentiable, a quasi-Newton approach is appropriate. 
If this is not the case, a derivative-free optimizer should be chosen.

Note that we can restrict the minimization in \eqref{eq:interpolConcFiniteDimVec} 
to a compact subset of $\R^{ND}$ without loss of generality. To this end, let $\underline{\boldsymbol{K}} \in \R^{ND \times ND}$ be 
the $N \times N$ matrix of matrices $\boldsymbol{K}(\bx_i,\bx_j) \in \R^{D \times D}$ and note that 
$$
\mathcal{Q}(\bc) + \mathcal{N}(\bc) \geq 0 + \lambda_{\min}\left(\underline{\boldsymbol{K}}\right) \cdot \| \bc \|_2^2 \stackrel{\| \bc \|_2 \to \infty}{\longrightarrow} \infty,
$$
where $\lambda_{\min}\left(\underline{\boldsymbol{K}}\right) > 0$ denotes the smallest eigenvalue of $\underline{\boldsymbol{K}}$. Therefore, we can restrict our search to the compact set 
$A := \left\{ \bc \in \R^{ND} \mid \| \bc \|_2 \leq C \right\}$ for a large enough $C > 0$. Unfortunately, we cannot directly obtain the existence of a minimizer from this since 
\eqref{eq:interpolConcFiniteDimVec} is not continuous.
 However, if we add a smooth term 
 \begin{align*}
 {\cal P}^{\gamma}(\bc) & = \gamma  \sum_{1 \leq m < n \leq N} \coth\left(\left\| \boldsymbol{g}(\bx_m) - \boldsymbol{g}(\bx_n) \right\|^2_2 \right) \\ 
 & = \gamma \sum_{1\leq m < n \leq N} 
 \coth\left( \left\| \sum_{(j,\ell) \in \mathcal{I}} c_{j,\ell} \left( \boldsymbol{K}\left(\bx_{j}, \bx_{m} \right) - \boldsymbol{K}\left(\bx_{j}, \bx_{n} \right) \right)
 \boldsymbol{e}_{\ell} \right\|_2^2 \right),
 \end{align*} which is equivalent to \eqref{eq:interpolConcUnrestrictedWithPenalty}, for $\gamma > 0$,
 we can deduce the existence of a minimizer with the direct method from the calculus of variations.
 To this end, note that for a minimizing sequence $\left(\bc_i\right)_{i=1}^{\infty}$ of $\mathcal{Q} + \mathcal{N} + \mathcal{P}^{\gamma}$, there necessarily exist $i_0 \in \N$ and $C_0 > 0$ 
 such that all mutual squared distances $\left\| \boldsymbol{g}(\bx_m) - \boldsymbol{g}(\bx_n) \right\|^2_2$ with $1\leq m < n \leq N$ are larger than $C_0$ for 
 all $\bc_i$ with $i > i_0$. Therefore, we can restrict the minimization to the compact subdomain 
 $$
 A \cap \left\{\bc \in \R^{ND} \mid \| \boldsymbol{g}(\bx_m) - \boldsymbol{g}(\bx_n) \|_2^2 \geq C_0 ~ \text{ for } 1 \leq m < n \leq N \right\},
 $$
 on which $\mathcal{Q} + \mathcal{N} + \mathcal{P}^{\gamma}$ is continuous, and the existence of a minimizer follows.
 Nevertheless, as we explained above, the critical condition ${\cal P}^{\gamma}(\bc) = \infty$ is practically never met 
 for moderate data set sizes and, therefore, it is safe to assume that there also 
exists a minimizer for \eqref{eq:interpolConcFiniteDimVec}.
Note however that, depending on the kernels and the data at hand, there usually might exist many minimizers and the solution to \eqref{eq:interpolConcFiniteDimVec} might not be unique.
To reduce the chance of getting stuck in a local minimum, we propose to restart the minimization procedure several times with different 
starting values for $\bc^*$.

Since we will be dealing with differentiable kernel functions in Section \ref{sec:Experiments} and since the derivatives of these kernels can be computed explicitly,
we propose a BFGS minimization algorithm to solve \eqref{eq:interpolConcFiniteDimVec}. To this end, note that the only derivatives we need are 
essentially the derivative of the inverse of $\boldsymbol{Q}_{X,X}(\bc)$, i.e.
$$
\frac{\partial}{\partial c_{m,n}} \boldsymbol{Q}_{X,X}^{-1}(\bc) = - \boldsymbol{Q}_{X,X}^{-1}(\bc) \frac{\partial}{\partial c_{m,n}} \boldsymbol{Q}_{X,X}(\bc) \boldsymbol{Q}_{X,X}^{-1}(\bc),
$$
and the derivative of $\boldsymbol{Q}_{X,X}(\bc)$. The latter consists of the derivative of the outer kernel $K$, which 
is known analytically for all kernel choices that we discuss in Section \ref{sec:Experiments}, 
and 
$$
\frac{\partial}{\partial c_{m,n}} \boldsymbol{g}(\bx) = \frac{\partial}{\partial c_{m,n}} \sum_{(j,\ell) \in \mathcal{I}} c_{j,\ell}\boldsymbol{K}\left(\bx_{j},\bx \right)\boldsymbol{e}_{\ell}
= \boldsymbol{K}\left(\bx_{m},\bx \right)\boldsymbol{e}_{n}
$$ 
for each $(m,n) \in \mathcal{I}$.
The overall computational cost complexity for one BFGS step, i.e.~the evaluation of ${\mathcal{Q}}, {\mathcal{N}}$ and their derivatives, is bounded by ${\cal O}\left(N^3D + (ND)^2\right)$.


\subsection{Two-layer Least-squares regression}
\label{subsec:ConcRegression}

After the discussion of the two-layer interpolation problem in the last section, we now consider the regularized two-layer least-squares problem in more detail. 
This is a natural extension of the two-layer least-squares problem RLS2 considered in \cite{DinuzzoPhd} to the case of nonlinear outer kernels.

\subsubsection{Definition of the problem}

For concatenated, regularized least-squares regression, the minimization task changes to
\begin{align*} \label{eq:regressionConc} \tag{R}
J_{\lambda,\mu}\left( f ,\boldsymbol{g}\right) := \sum_{j=1}^{N}\left|f \circ \boldsymbol{g}\left(\bx_{j} \right)-y_{j} \right|^{2} + &
\lambda \left\|f\right\|^{2}_{ \outerRKHS } + \mu \left\|\boldsymbol{g} \right\|^{2}_{\innerRKHS } \\
& \quad \rightarrow \min \text{ for } f \in \outerRKHS, g \in \innerRKHS \; \; \,
\end{align*}
with $\lambda, \mu > 0$, which is in the same fashion as the standard least-squares regression problem \eqref{eq:regressStd}.

Analogously to our considerations in Section \ref{subsec:ConcInterpolation}, we find that, for fixed inner points $Z = \boldsymbol{g}(X) \subset \Phi$, 
the function $f^{\lambda}_{Z,Y}$, see \eqref{eq:regressStd}, is the solution of the problem
\begin{equation*}
\sum_{j=1}^{N}\left|f(\bz_j) -y_{j} \right|^{2} + \lambda \left\|f\right\|^{2}_{ \outerRKHS } 
  \rightarrow \min  \text{ for } f \in \outerRKHS .
\end{equation*}
The corresponding coefficients $\boldsymbol{\alpha}^{\lambda} \in \R^N$ with respect to the basis $\{ K(\bz_j, \cdot) \mid j=1,\ldots,N \}$ 
are computed by solving 
$$
\left(\boldsymbol{M}_{Z,Z} + \lambda \boldsymbol{I} \right) \boldsymbol{\alpha}^{\lambda} = \boldsymbol{y}.
$$
Therefore, each of the terms of the optimal energy can be expressed as 
\begin{eqnarray*}
 \left\|f_{Z,Y}^{\lambda}\right\|^{2}_{ \outerRKHS }&=&{\boldsymbol{\alpha}^{\lambda}}^{T}  \boldsymbol{M}_{Z,Z} \boldsymbol{\alpha}^{\lambda} = 
\boldsymbol{y}^{T} \left( \boldsymbol{M}_{Z,Z} +\lambda \boldsymbol{I} \right)^{-1}  \boldsymbol{M}_{Z,Z} \left( \boldsymbol{M}_{Z,Z} +\lambda \boldsymbol{I} \right)^{-1} \boldsymbol{y}, \\
\sum_{j=1}^{N}\left|f_{Z,Y}^{\lambda}\left(\bz_{j} \right)-y_{j} \right|^{2}&=& 
\left\|\boldsymbol{M}_{Z,Z} \boldsymbol{\alpha}^{\lambda} - \boldsymbol{y}  \right\|_{2}^{2} =
\left\|\left(\boldsymbol{I}-\boldsymbol{M}_{Z,Z} \left( \boldsymbol{M}_{Z,Z} +\lambda \boldsymbol{I} \right)^{-1}\right) \boldsymbol{y}  \right\|_{2}^{2}.
\end{eqnarray*}

\subsubsection{Application of the representer theorem}

Analogously to \eqref{eq:interpolConcUnrestricted}, we can use 
\begin{align} \label{eq:regressionFuncinMatrixForm}
J_{\lambda,\mu}\left(f_{\boldsymbol{g}\left(X \right),Y}^{\lambda},\boldsymbol{g} \right) & = 
\lambda \boldsymbol{y}^{T} \left( \boldsymbol{M}_{\boldsymbol{g}\left(X \right), \boldsymbol{g}\left(X \right)} +\lambda \boldsymbol{I} \right)^{-1} 
\boldsymbol{M}_{ \boldsymbol{g}\left(X \right), \boldsymbol{g}\left(X \right) } \left( \boldsymbol{M}_{ \boldsymbol{g}\left(X \right), \boldsymbol{g}\left(X \right) }
+\lambda \boldsymbol{I} \right)^{-1}  \boldsymbol{y}  \nonumber \\
&+ \mu \left\|\boldsymbol{g} \right\|^{2}_{\innerRKHS } +\left\|\left(\boldsymbol{I}-\boldsymbol{M}_{ \boldsymbol{g}\left(X \right),\boldsymbol{g}\left(X \right) }
\left( \boldsymbol{M}_{ \boldsymbol{g}\left(X \right) , \boldsymbol{g}\left(X \right) } + \lambda \boldsymbol{I} \right)^{-1}\right) \boldsymbol{y}  \right\|_{2}^{2}
\end{align}
to reformulate \eqref{eq:regressionConc} as 
\begin{equation*} \label{eq:regressionConcUnrestricted} \tag{uR}
 J_{\lambda,\mu}\left(f_{\boldsymbol{g}\left(X \right),Y}^{\lambda},\boldsymbol{g} \right) \rightarrow \min  \text{ for } \boldsymbol{g} \in \innerRKHS.
\end{equation*} 
\begin{corollary} \label{corr:representerReg}
The solution $\boldsymbol{g}^{\lambda,\mu}$ to the unconstrained concatenated regression problem \eqref{eq:regressionConcUnrestricted} fulfills $\boldsymbol{g}^{\lambda,\mu} \in V_X$.
\end{corollary}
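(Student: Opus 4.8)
The plan is to invoke Theorem \ref{theo:generalRepTheo} directly, exactly in the spirit of the proof of Corollary \ref{corr:representerInt}. The first step is to recognize that the joint regression functional $J_{\lambda,\mu}$ from \eqref{eq:regressionConc} is a special instance of the general functional $J$ in \eqref{eq:generalMinFunct} for $L=2$: setting ${\cal H}_1 = \outerRKHS$, ${\cal H}_2 = \innerRKHS$, the squared-error loss ${\cal L}(y,z) = |z - y|^2$, and the penalty functions $\Theta_1(x) = \lambda x$ and $\Theta_2(x) = \mu x$, one reproduces $J_{\lambda,\mu}$ verbatim.

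Next I would verify the hypotheses of Theorem \ref{theo:generalRepTheo}. The loss ${\cal L}(y,z) = |z-y|^2$ maps $\R^2$ into $[0,\infty) \subseteq [0,\infty]$ and is hence an admissible loss function; the theorem imposes no further structural requirement on ${\cal L}$. The essential condition is the strict monotonicity of the two regularizers, which holds because $\lambda, \mu > 0$ render $x \mapsto \lambda x$ and $x \mapsto \mu x$ strictly increasing on $[0,\infty)$. With these choices, Theorem \ref{theo:generalRepTheo} guarantees that every joint minimizer $(f,\boldsymbol{g})$ of $J_{\lambda,\mu}$ over $\outerRKHS \times \innerRKHS$ satisfies $\boldsymbol{g} \in \tilde{V}_2 = \operatorname{span}\{\boldsymbol{K}(\bx_i,\cdot)\boldsymbol{e}_j \mid i=1,\ldots,N,\ j=1,\ldots,D\} = V_X$.

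The only point that needs a brief argument is the passage from the unrestricted problem \eqref{eq:regressionConcUnrestricted} back to the joint problem \eqref{eq:regressionConc}. By construction, for each fixed $\boldsymbol{g}$ the outer function $f^{\lambda}_{\boldsymbol{g}(X),Y}$ is the minimizer of $f \mapsto J_{\lambda,\mu}(f,\boldsymbol{g})$, so that $J_{\lambda,\mu}(f^{\lambda}_{\boldsymbol{g}(X),Y},\boldsymbol{g}) = \min_{f \in \outerRKHS} J_{\lambda,\mu}(f,\boldsymbol{g})$; hence $\boldsymbol{g}^{\lambda,\mu}$ minimizes \eqref{eq:regressionConcUnrestricted} if and only if the pair $(f^{\lambda}_{\boldsymbol{g}^{\lambda,\mu}(X),Y},\boldsymbol{g}^{\lambda,\mu})$ is a joint minimizer of \eqref{eq:regressionConc}. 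Applying Theorem \ref{theo:generalRepTheo} to this pair then yields $\boldsymbol{g}^{\lambda,\mu} \in V_X$. I do not expect a genuine obstacle: the statement is an immediate corollary once the functional is matched to the abstract form, and, in contrast to the interpolation case, the loss is finite-valued and continuous, so no $\infty$-convention enters.
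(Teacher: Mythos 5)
Your proposal is correct and takes essentially the same route as the paper's own proof, which likewise applies Theorem \ref{theo:generalRepTheo} with $L=2$, ${\cal L}(y,z)=|z-y|^2$, $\Theta_1(x)=\lambda x$ and $\Theta_2(x)=\mu x$. Your explicit remark on the equivalence between minimizers of the unrestricted problem \eqref{eq:regressionConcUnrestricted} and joint minimizers of \eqref{eq:regressionConc} is a point the paper leaves implicit, but it is exactly the reduction the paper relies on.
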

\begin{proof}
We apply Theorem \ref{theo:generalRepTheo} with $L = 2$, $\Theta_1(x) = \lambda \cdot x$, $\Theta_2(x) = \mu \cdot x$ and
$$
{\cal L}\left(y_i,f \circ \boldsymbol{g}(\bx_i)\right) = 
\left|f \circ \boldsymbol{g}\left(\bx_{i} \right)-y_{i} \right|^{2},
$$
which resembles the regression problem \eqref{eq:regressionConc}.
\end{proof}

Hence, as for interpolation, we obtain a representer theorem for concatenated least-squares regression, which allows us to replace
the infinite-dimensional optimization problem \eqref{eq:regressionConc} with the finite-dimensional problem
\begin{equation*} \label{eq:regressionConcFiniteDim} \tag{uR-X}
 J_{\lambda,\mu}\left(f_{\boldsymbol{g}\left(X \right),Y}^{\lambda},\boldsymbol{g} \right) \rightarrow \min  \text{ for } \boldsymbol{g} \in V_X \subset \innerRKHS.
\end{equation*}


Finally, we want to express \eqref{eq:regressionConcFiniteDim} in terms of the coefficients $\bc^{\lambda,\mu} = \left(c^{\lambda,\mu}_{1,1},\ldots,c^{\lambda,\mu}_{N,D}\right)^T$ 
of $\boldsymbol{g}^{\lambda,\mu}$ with respect to the basis $\left\{ \boldsymbol{K}\left(\bx_{j}, \cdot \right)\boldsymbol{e}_{\ell}  \mid  (j,\ell) \in \mathcal{I} \right\}$. 
To this end, we set $\boldsymbol{A} := \left(\boldsymbol{Q}_{X,X}\left(\bc \right) + \lambda \boldsymbol{I}\right)^{-1}$ and define the quadratic forms 
\begin{align*}
\mathcal{Q}^{\lambda}:\R^{ND} \to \R, & \quad \bc  \mapsto 
\lambda \cdot \boldsymbol{y}^{T} \boldsymbol{A} \boldsymbol{Q}_{X,X}\left(\bc\right) 
\boldsymbol{A} \boldsymbol{y}, \\
\mathcal{N}^{\mu}: \R^{ND} \to \R, & \quad \bc \mapsto \mu \cdot \mathcal{N}(\bc) \quad \text{ and} \\
\mathcal{C}^{\lambda}: \R^{ND} \to \R, & \quad \bc \mapsto \boldsymbol{y}^T \left(\boldsymbol{I}-\boldsymbol{Q}_{X,X}\left(\bc\right) \boldsymbol{A} \right)^T 
\left(\boldsymbol{I}-\boldsymbol{Q}_{X,X}\left(\bc\right) \boldsymbol{A} \right) \boldsymbol{y}
\end{align*}
with the help of \eqref{eq:QXX} and \eqref{eq:NXX}. 
Subsequently, we arrive at the optimization problem
\begin{equation*}
\label{eq:regressionConcFiniteDimVec} \tag{Reg}
\bc^{\lambda,\mu} = \argmin{\bc \in \R^{ND}} \mathcal{Q}^{\lambda}\left(\bc \right) + \mathcal{N}^{\mu}\left(\bc \right) + \mathcal{C}^{\lambda}\left( \bc \right),
\end{equation*}
which is the equivalent to \eqref{eq:regressionConcFiniteDim}.

\subsubsection{Solving the minimization problem}

Note that the existence of a minimizer follows by similar arguments as in the previous section for the interpolation problem, i.e.
$$
\mathcal{Q}^{\lambda}\left(\bc \right) + \mathcal{N}^{\mu}\left(\bc \right) + \mathcal{C}^{\lambda}\left( \bc \right) \geq 
\mu \cdot \lambda_{\min}\left(\underline{\boldsymbol{K}}\right) \cdot \| \bc \|_2^2 \stackrel{\| \bc \|_2 \to \infty}{\longrightarrow} \infty
$$
and we can thus restrict the search for a minimizer to a compact subset of $\R^{ND}$. 
For regression we need the inverse of $\boldsymbol{Q}_{X,X}(\bc) + \lambda \boldsymbol{I}$ to compute $\mathcal{Q}^{\lambda}$, which is positive definite 
for every $\lambda > 0$ and, therefore, there are no pathological cases as in the interpolation setting. Thus, the functions
${\mathcal{Q}^{\lambda}},\mathcal{N}^{\mu},\mathcal{C}^{\lambda}$ are continuous and the minimization of \eqref{eq:regressionConcFiniteDimVec} over a
compact subset of $\R^{ND}$ has a minimizer. Nevertheless, also in this case the minimizer is not necessarily unique.

While the optimization for the coefficients in the RLS2 algorithm proposed in \cite{DinuzzoPhd} boils down to a simplex-constrained linear least-squares problem, we have 
to deal with a high degree of nonlinearity here. Nevertheless, if the kernel functions are differentiable, we can again - as in the interpolation case - employ a BFGS algorithm with several restarts to approximately find the optimal coefficients $\bc^{\lambda,\mu}$. To this end, 
note that $\mathcal{Q}^{\lambda}$ and $\mathcal{N}^{\mu}$ can be computed similarly as $\mathcal{Q}$ and $\mathcal{N}$ in the interpolation case. Furthermore, also
the derivative of ${\mathcal{C}}^{\lambda}$ can be computed with the same techniques since we essentially only need the derivatives of $\boldsymbol{Q}_{X,X}(\bc)$ and
$\left( \boldsymbol{Q}_{X,X}\left(\bc\right) + \lambda \boldsymbol{I} \right)^{-1}$. While the number of terms is larger than in the interpolation case, 
the asymptotic computational runtime is still bounded by ${\cal O}\left(N^3D + (ND)^2\right)$. Furthermore, the condition number of the matrix $\boldsymbol{Q}_{X,X}\left(\bc\right) + \lambda \boldsymbol{I}$ 
is smaller than the one of $\boldsymbol{Q}_{X,X}$, which had to be inverted for interpolation. Therefore, computing $\mathcal{Q}^{\lambda}(\bc)$ 
with an iterative solver for the application of $\left(\boldsymbol{Q}_{X,X}\left(\bc\right) + \lambda \boldsymbol{I}\right)^{-1}$ needs fewer computational steps 
than computing ${\mathcal{Q}(\bc)}$ in the interpolation case.

Finally, let us remark that for both interpolation and least-squares regression there exists another possibility to obtain a finite-dimensional optimization problem from \eqref{eq:interpolConc} and \eqref{eq:regressionConc}, respectively, without using the representer theorem. 
We could discretize the functions $f_1 \in {\cal H}_1$ and $f_2 \in {\cal H}_2$ by $\tilde{f}_1 \in V_1$ and $\tilde{f}_2 \in V_2$ with finite-dimensional spaces 
$V_1,V_2$, see e.g.~\cite{BohnGriebel} for an error analysis of this scenario for single-layer regression. However, when following this approach, the choice of the specific discretization can severely influence the results of the minimization. 
Furthermore, we are limited by the size of the dimensions of the discretization spaces $V_1, V_2$, which influences the computational costs for solving the underlying optimization problem. 

\section{The effects of concatenated learning}
\label{sec:Experiments}

This section serves to illustrate the main operating principle behind the concatenated interpolation and regression algorithms presented in the previous section. Note that our brief 
considerations in this section are not meant to provide a thorough numerical analysis of the performance of the algorithms but are rather thought to aid the understanding of 
their internal mechanisms. For benchmarks of highly performant variants of our basic algorithms on real-world data we 
refer the interested reader to \cite{LawrenceDeepGauss,Rebai,Zhuang}.

\subsection{Kernel choice}
 For reasons of simplicity, we will stick 
to the two-layer case and to outer function spaces $\outerRKHS$ with associated kernel $K$ which are defined on the whole space $\R^D$. 
This way, the image ${\Phi}$ of the inner function space is automatically contained in the domain of the outer function space. 
Furthermore, if not stated otherwise, we assume that the matrix-valued kernel 
$\boldsymbol{K}:\Omega \times \Omega \to \R^{D \times D}$ of the inner RKHS can be written as 
\begin{equation} \label{eq:innerKernelChoice}
 \boldsymbol{K}(\bx,\boldsymbol{y}) =  K_{\mathcal{I}}(\bx,\boldsymbol{y}) \cdot \text{diag}(\boldsymbol{a})
\end{equation}
for some weight vector $\boldsymbol{a} \in \R_+^D$. Here, $\text{diag}(\boldsymbol{a})$ denotes the diagonal matrix $\boldsymbol A$ with $\boldsymbol{A}_{ii} = \boldsymbol{a}_i$ 
and $K_{\mathcal{I}} : \Omega \times \Omega \to \R$ is a scalar-valued kernel function.

Possible outer and inner kernel functions $K$ and $K_{\mathcal{I}}$ are the 
polynomial kernel 
$$
K_{\text{Poly},p}(\bx,\by) := \left( \bx^T \by + 1 \right)^{p},
$$
the Gaussian kernel
$$
K_{\text{Gauss},\sigma}(\bx,\by) := \exp\left(- \frac{\| \bx - \by \|^2}{2\sigma^2} \right)
$$
and the tensor-product Mat\'{e}rn kernel
$$
K_{\text{TensorMat\'{e}rn},s}(\bx,\by) := \prod_{i=1}^d \kappa_{\frac{2s-1}{2}}\left(| x_i - y_i |\right) \cdot | x_i - y_i |^{\frac{2s-1}{2}}.
$$
where $\kappa_\alpha$ denotes the modified (hyperbolic) Bessel function of the second kind with parameter $\alpha$. Note that the latter characterizes the Sobolev space 
of dominating mixed smoothness of order $s \in \N$, see e.g.~\cite{FasshauerYe,HarbrechtMatern} for a bi-variate version.
These Sobolev spaces play an important role for hyperbolic cross or sparse grid approximations for instance, see e.g.~\cite{BungartzGriebel}.
Note that the Gaussian kernel is already a tensor product kernel by nature. 

\subsection{Experiment design}

Let us choose $\Omega = [-1,1]^2$. We will evaluate our method for the two test functions
\begin{eqnarray*}
 h_1 : \Omega \to \R &~~& h_1(x,y) := (0.1 + |x - y|)^{-1} \\
 h_2 : \Omega \to \R &~~& h_2(x,y) := \left\{ \begin{array}{ll} 1 & \text{ if } x \cdot y > \frac{3}{20} \\ 0 & \text{ else } \end{array}
\right..
\end{eqnarray*}
The function $h_1$ employs a kink-like structure along the diagonal of the domain, while $h_2$ represents an indicator function with a jump. Neither of these two functions is an element of
a reproducing kernel space spanned by any of the above kernel functions
for arbitrary parameters $p,s \in \N, \sigma \in (0,\infty)$. Therefore they cannot be approximated too well by a single-layer method.
The approximation of such functions with kinks or jumps by (a composition of) smooth functions plays an important role in applications from econometrics, finance or two-phase flow problems
for example. 

We choose $D=d=2$, i.e.~$\Omega, {\Phi} \subset \R^2$, and $\boldsymbol{a} = (1~ 1)^T$. 
Then, we independently draw $N = 100$ random equidistributed points
$\left\{\bx_{1},\dots,\bx_{N}\right\} \subset \Omega$ and set $y_i := h_*(\bx_i) + \varepsilon_i$ for all $i = 1,\ldots,N$ for the function $h_* \in \{h_1,h_2\}$. Here, 
$\varepsilon_i$ are additive noise perturbations which are drawn i.i.d.~according to a centered Gaussian distribution with standard deviation $0.01$.
To solve \eqref{eq:interpolConcFiniteDimVec} or \eqref{eq:regressionConcFiniteDimVec}, respectively,
we use a BFGS algorithm with random initialization of the coefficient vector $\bc$ of the inner function, see also \eqref{eq:optimalgInt}.
As the goal functions employ many local minima, we run the algorithm 
sufficiently many times to achieve a good approximation to the global minimum. It turned out that $64$ runs were sufficient for our case of $100$ data points in $2$ dimensions.
From the $64$ runs we pick the vector $\bc$ (and with this the functions $f$ and $\boldsymbol{g}$) 
for which the smallest goal function value in \eqref{eq:interpolConcFiniteDimVec} 
or \eqref{eq:regressionConcFiniteDimVec}, respectively, is achieved.

To be able to compare our computed $f(g(\cdot))$, which approximates the true solution 
$f_{\boldsymbol{g}\left(X\right),Y}^*\left(\boldsymbol{g}^*\left(\cdot\right)\right)$ or $f_{\boldsymbol{g}\left(X\right),Y}^{\lambda}\left(\boldsymbol{g}^{\lambda,\mu}(\cdot)\right)$, 
respectively,
to the result of a standard kernel interpolation/regression, we also calculate the interpolant/regressor $w \in \left\{f^*_{X,Y},f^{\lambda}_{X,Y}\right\}$. This resembles 
the solution to \eqref{eq:interpolStd} or \eqref{eq:regressStd}, respectively, for the reproducing kernel Hilbert space $H(\Omega,\R)$ which employs the 
same kernel type and parameters as 
$\outerRKHS$ but on the domain $\Omega$ instead of $\Phi$.
We then define $\bt_i, i=1,\ldots,n_t,$ as the points of a uniform grid of meshwidth $\frac{1}{50}$ over $\Omega = [-1,1]^2$, i.e.~$n_t = 101^2$, 
and consider the \emph{pointwise error}
\begin{eqnarray*}
 | \left(f \circ \boldsymbol{g} - h_*\right)(\bt_i) | ~~~ \text{ and } ~~~  | \left(w - h_*\right)(\bt_i) |, 
\end{eqnarray*}
which we visualize in a two-dimensional contour plot. 

\subsubsection{Interpolation} \label{subsec:interpolatwolayerexample}
We first compare the results for two-layer interpolation, see \eqref{eq:interpolConcFiniteDimVec}, with the results for single-layer interpolation, see \eqref{eq:interpolStd}.
To this end, we choose an outer Mat\'{e}rn kernel $K = K_{\text{TensorMat\'{e}rn},s}$ with $s=1$ and 
an inner polynomial kernel $K_{\mathcal{I}} = K_{\text{Poly},p}$ with $p=1$ or $p=2$.
In Figure
\ref{fig:MatPolTPInt} we display the pointwise errors.
We observe that there is a visible improvement in the error when dealing with two-layer interpolation instead of single-layer 
interpolation. While the benefits of two-layer interpolation are already observable for the test function $h_2$, they become even more obvious for $h_1$. 
As explained in the beginning of Section \ref{sec:IntChained}, the fact that the kink of $h_1$ is not parallel to a coordinate axis poses a problem when dealing with the tensor-product 
kernel. Since a linear transformation (rotation) would suffice to remedy this problem, the polynomial kernel of degree $p=1$ already suffices to obtain a better error behavior. 
Therefore, $p=2$ can already lead to a small overfitting effect as we observe in Figure \ref{fig:MatPolTPInt}.
Nevertheless, the error is still significantly better than in the single-layer 
case. In the case of $h_2$, however, we have a jump along two nonlinear curves. Here, $p=2$ seems to be more appropriate to deal with this problem.
%
Overall, we come to the conclusion that interpolation in reproducing kernel Hilbert spaces can significantly benefit from a two-layer approach if the reproducing kernel 
at hand does not suit the underlying function. 


\begin{figure}
\centering
\includegraphics{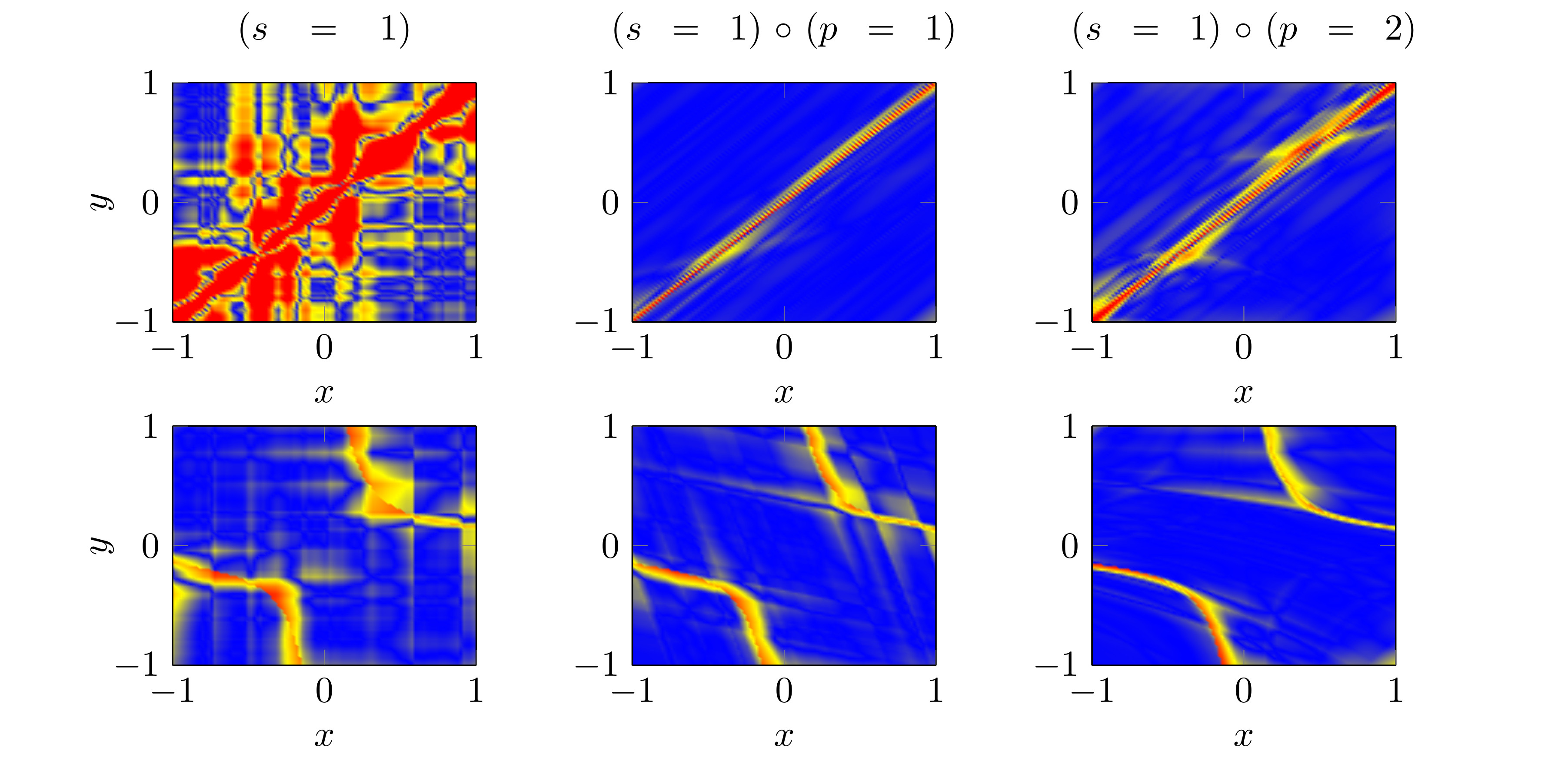}
\caption{The pointwise error for standard interpolation with $w = f^*_{X,Y}$ (left) and for concatenated interpolation with outer kernel $K_{\text{TensorMat\'{e}rn},1}$
and inner kernel $K_{\text{Poly},1}$ (mid) or $K_{\text{Poly},2}$ (right), respectively. We plotted both, the error for $h_1$ (top) and $h_2$ (bottom).
The color scale ranges from blue ($0 \%$ error) to red (more than $10 \%$ error), 
where the percentage has to be understood with respect to the $\| \cdot \|_{L_{\infty}}$ norm of $h_1$ or $h_2$, respectively.}
\label{fig:MatPolTPInt}
\end{figure}

\subsubsection{Regression}
Now we have a look at solving the least-squares regression problem \eqref{eq:regressionConcFiniteDimVec}. 
To determine the optimal 
parameters $\lambda$ and $\mu$, we run a $5$-fold cross-validation on the input data for all possible choices $\lambda, \mu \in \{ 2^{-2t + 1} \mid t = 1,\ldots,10\}$. 
Subsequently, we use the parameter pair $(\lambda, \mu)$ for which the smallest function value of \eqref{eq:regressionConcFiniteDimVec} is achieved and run the regression 
algorithm on the whole input data set to obtain our final results. We compare the two-layer case with the single-layer regression, see also \eqref{eq:regressStd},
with the parameter $\lambda$, which achieves the smallest error, i.e.~we compare to the best possible single-layer solution.

\begin{figure}
\centering
\includegraphics{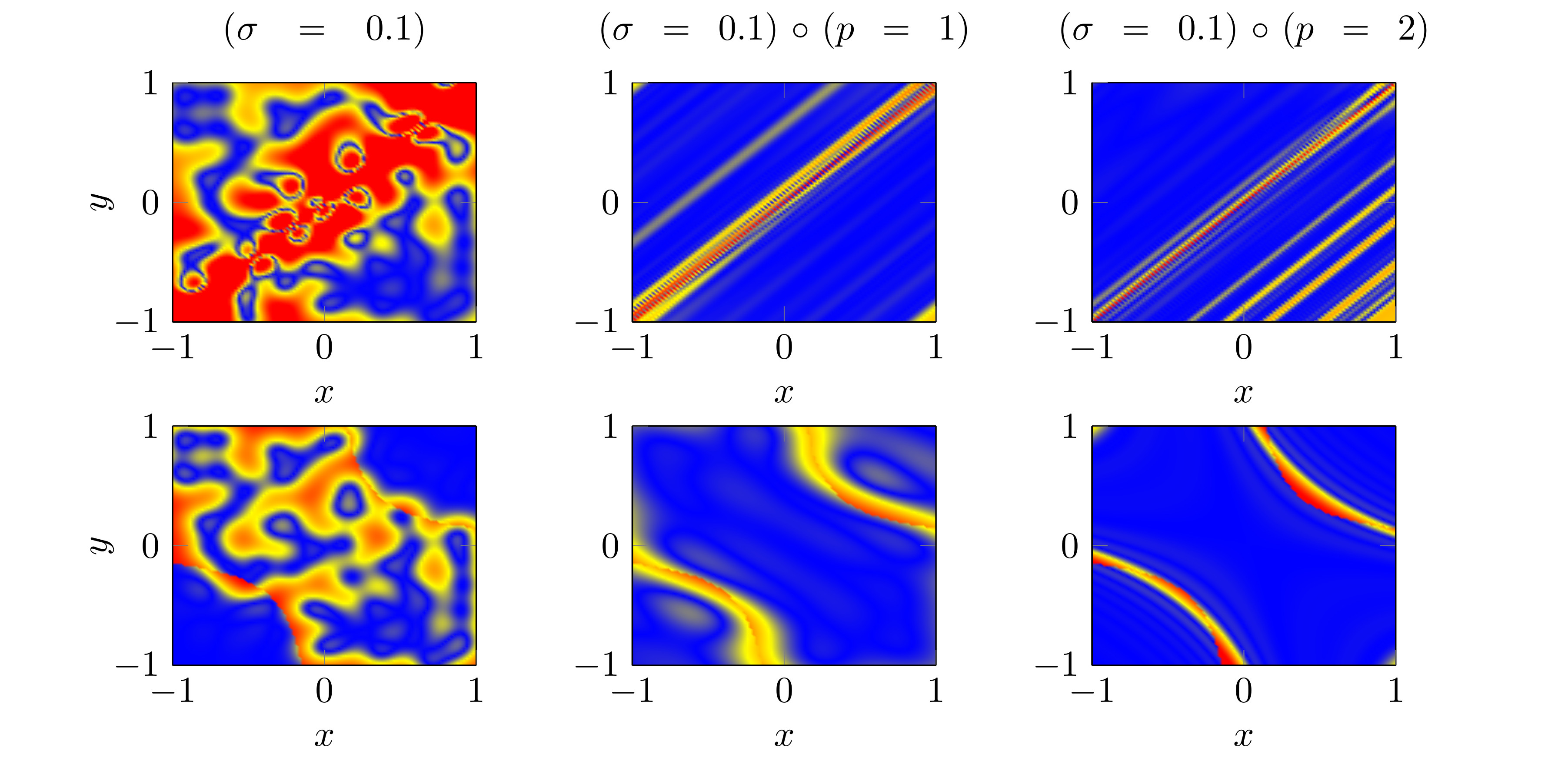}
\caption{The pointwise error for standard least-squares with $w = f^{\lambda}_{X,Y}$ (left) and for concatenated least-squares with outer kernel $K_{\text{Gauss},0.1}$ and 
inner kernel $K_{\text{Poly},1}$ (mid) or 
$K_{\text{Poly},2}$ (right), respectively. 
We plotted both, the error for $h_1$ (top) and $h_2$ (bottom). The color scale ranges from blue ($0 \%$ error) to red (more than $10 \%$ error), 
where the percentage has to be understood with respect to the $\| \cdot \|_{L_{\infty}}$ norm of $h_1$ or $h_2$, respectively.}
\label{fig:GauPolReg}
\end{figure}

Since the results for interpolation and least-squares regression with the same kernel choices as above
happen to be similar, we employ an outer kernel of Gaussian type $K = K_{\text{Gauss},\sigma}$ with $\sigma = 0.1$ instead of Mat\'{e}rn type here. 
For the inner kernel we again choose $K_{\mathcal{I}} = K_{\text{Poly},p}$ with $p=1,2$.
As we observe in Figure
\ref{fig:GauPolReg}, there is a significant improvement of the two-layer approach over the single-layer one.
Note that we deliberately employ the kernel width $\sigma = 0.1$, which appears to be too small for single-layer regression. 
However, the two-layer approach seems to remedy this bad choice automatically by adjusting the inner transformation accordingly. In this regard, the algorithm 
can also be understood as an implicit hyperparameter tuner.

\subsubsection{Linear outer kernel} \label{sec:linOuter}
In this section, we again want to emphasize the difference of our approach, which allows for nonlinear outer kernels, 
to the MKL-type RLS2 algorithm of \cite{DinuzzoPhd}, where only a linear outer kernel is considered and the inner kernel is given by a diagonal matrix with 
its entries being different scalar-valued (nonlinear) kernels. To this end, we run our two-layer least-squares regression approach for the following two settings:
\begin{itemize}
 \item[(1)] Outer polynomial kernel $K = K_{\text{Poly},1}$ of order $1$, inner mixture kernel $\boldsymbol{K}(\bx,\by)$,
 \item[(2)] Outer Mat\'{e}rn kernel $K = K_{\text{TensorMat\'{e}rn},1}$ of order $1$, inner mixture kernel $\boldsymbol{K}(\bx,\by)$.
\end{itemize}
For the inner mixture kernel, we deviate from \eqref{eq:innerKernelChoice} and from $D=2$ here. To this end, we set $D=5$ and use a diagonal kernel $\boldsymbol{K}$ with 
different scalar-valued kernels as entries. For the five scalar-valued kernels we choose three Gaussian kernels $K_{\text{Gauss},\sigma}$ with $\sigma = 0.1, 1, 10$ 
and two polynomial kernels $K_{\text{Poly},p}$ with $p=1,2$. 
Setting (1) serves to represent the RLS2 algorithm\footnote{Note however, that we did not use a diagonal scaling of the linear kernel 
and our optimization algorithm is different from the one used in \cite{DinuzzoPhd}, which is adjusted to the problem with a linear outer kernel.
}, where similar choices for the inner kernel have been made, see \cite{DinuzzoPhd}.
To determine the optimal parameters $\lambda, \mu \in \{ 10^{-2t + 1} \mid t = 1,\ldots,6\}$, we again run a $5$-fold crossvalidation\footnote{Note that we scan 
a coarser (but wider) range than in the previous section, which seemed to be appropriate here.}.
The results can be found in figure \ref{fig:DinuzzoComparison}. As we have already seen for interpolation, the structure of the function $h_1$
admits a good representation by a two-layer kernel discretization of type (2). However, despite the quite generic choice of the inner kernel in setting (1), the two-layer 
kernel approach with a linear outer kernel is not able to find a good representation of the function. 
This shows that a nonlinear choice for the outer kernel can be necessary to find suitable approximations by the two-layer algorithm.
Although the results do not differ that much for $h_2$, we again see that there is a slight advantage in approximating with a nonlinear outer kernel.

\begin{figure}
\centering
\includegraphics{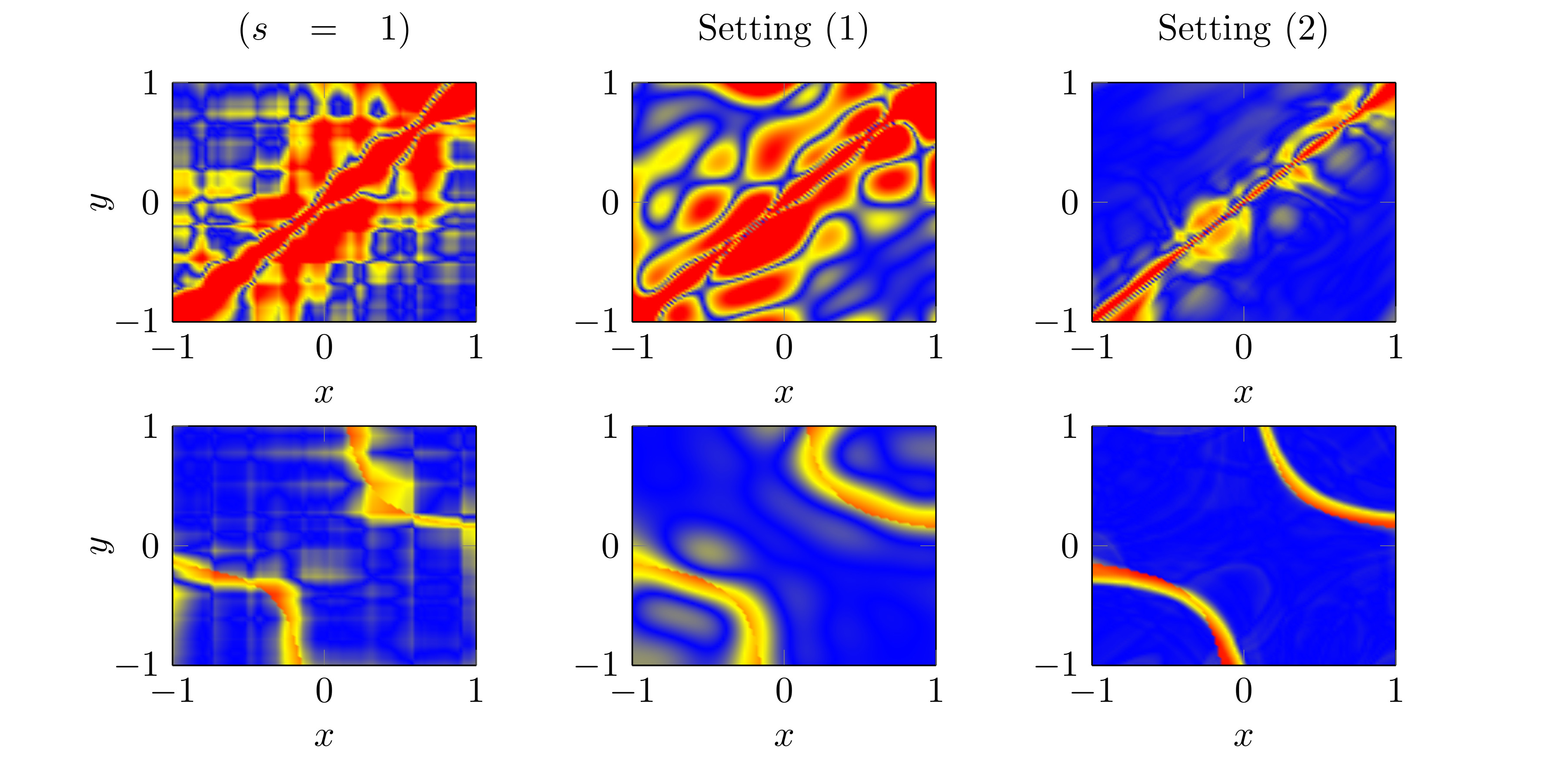}
\caption{The pointwise error for standard least-squares with Mat\'{e}rn kernel and $w = f^{\lambda}_{X,Y}$ (left) and for 
concatenated least-squares with setting (1) (mid) and setting (2) (right) from section \ref{sec:linOuter}.
We plotted both, the error for $h_1$ (top) and $h_2$ (bottom). The color scale ranges from blue ($0 \%$ error) to red (more than $10 \%$ error), 
where the percentage has to be understood with respect to the $\| \cdot \|_{L_{\infty}}$ norm of $h_1$ or $h_2$, respectively.}
\label{fig:DinuzzoComparison} 
\end{figure}

\subsection{Transformation by the inner function}

To get a better impression on how the two-layer algorithms work, we exemplarily inspect the inner function $\boldsymbol{g}$ in the case of interpolation with 
$K = K_{\text{TensorMat\'{e}rn},s}$ for $s=1$ and 
$K_{\mathcal{I}} = K_{\text{Poly},p}$ for $p=1$ or $p=2$, i.e.~for the setting from Section \ref{subsec:interpolatwolayerexample}. 
To this end, we depict isotropic grid points in $\Omega = [-1,1]^2$ and have a look at how these 
points are transformed by $\boldsymbol{g}$ in Figure \ref{fig:deformation}. 

\begin{figure} 
\centering
\includegraphics[width=0.99\columnwidth]{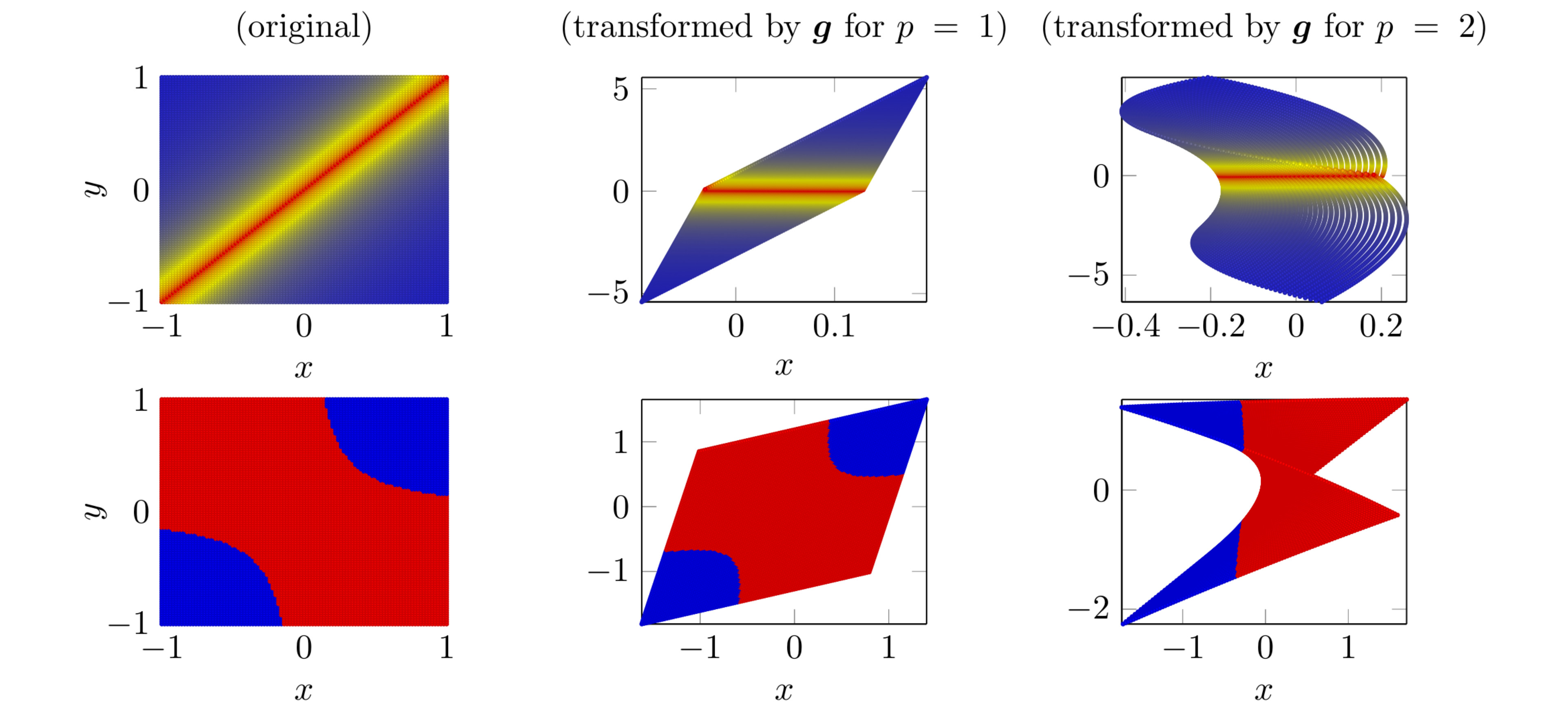}
\caption{The transformation of the isotropic grid points (left) by the inner function with $p=1$ (mid) and $p=2$ (right). The underlying problem is interpolation of $h_1$ (top) 
and $h_2$ (bottom) for the outer kernel $K_{\text{TensorMat\'{e}rn},1}$ and the inner kernel $K_{\text{Poly},p}$. The color scale represents the values of $h_1$ or $h_2$, respectively.}
\label{fig:deformation}
\end{figure}

We observe that for $h_1$ in both cases $p=1$ and $p=2$, the inner function aligns the kink almost perpendicular to the $y$-axis.
Therefore, one can easily characterize the kink by the $y$-coordinate after the inner transformation. 
This reduces the original two-dimensional kink description $x-y=0$ to just the one-dimensional description $y=0$. While the function with the 
kink along the diagonal does not reside in the tensor-product Mat\'{e}rn space of order $1$, which corresponds to the outer kernel in this example, 
a function with a kink parallel to one of the coordinate axes does. 
Therefore, the inner function $\boldsymbol{g}$ transforms the domain in such a way that the result resides in the RKHS to which 
the outer function belongs.

Considering the test function $h_2$, we see that a linear inner transformation, i.e.~$p=1$, essentially just rotates and shears the domain
and does not change the alignment of the jump very much. 
However, in the case $p=2$, the inner function $\boldsymbol{g}$ manages to transform the domain in such a way that the jump is now almost parallel to the $y$-axis. 
We observe that the pointwise errors in Figure \ref{fig:MatPolTPInt} really benefit from this transformation and the jump is resolved almost perfectly. 
Overall, we see that the inner function $\boldsymbol{g}$ tries to align the features of the original test function in such a way that they can be easily resolved by the outer function 
$f$.

\section{Conclusion}
\label{sec:Conclusion}

In this paper, we presented both a finite- and an infinite-sample representer theorem for concatenated machine learning problems.
In the finite-sample case, the statement essentially boils down to the fact that the a priori infinite-dimensional optimization problem, which appears when dealing with
function compositions from reproducing kernel Hilbert spaces, can be recast into 
a finite-dimensional optimization problem, where we only have to deal with at most $N$ kernel translates in each layer of the composition. Here, $N$ denotes the number of input data points.
In the infinite-sample case, we derived an analogous result stating that the solution in each layer is an element of the image space of the integral operator 
defined by the corresponding kernel evaluated at the innermost functions.
We introduced a simple neural network architecture, which represents the concatenated functions we are dealing with. Furthermore, we
established a connection between our representer theorem and two types of state-of-the-art deep learning algorithms, namely multi-layer multiple kernel learning and deep kernel networks.
Finally, we presented a detailed analysis on a two-layer interpolation and a two-layer least-squares regression algorithm, which can directly be derived from our representer theorem. 
We illustrated the operating principles of these algorithms with the help of two artificial test functions and explained why the two-layer approach is 
able to remedy the shortcomings of a single-layer variant. Furthermore, we highlighted that the use of a nonlinear outer kernel, instead of a linear one as in 
\cite{DinuzzoPhd}, can be inevitable to obtain good two-layer approximations.
Nevertheless, the nonlinearity of the outer layer makes the numerical treatment of the underlying optimization problem more difficult.

While we presented specific two-layer ($L=2$) algorithms and applied them to two-dimensional ($d=2$) toy problems for illustrative reasons, our representer theorems 
can also be applied in the high-dimensional case with an arbitrary number of layers. Note furthermore that, apart from interpolation and least-squares regression, also
more general choices of the loss function ${\cal L}$ and the regularizers $\Theta_l$ are allowed in \eqref{eq:generalMinFunct}. Therefore, one can also 
think of multi-layer support vector machines for instance. The construction of such efficient deep kernel learning algorithms for high-dimensional problems and a
thorough analysis of the interplay between the number of layers $L$ and the dimension $d$ will be future work.

\appendix
\section*{Appendix A: Remainder of the proof of theorem \ref{theo:RepTheoInfiniteSample}} \label{appendix}

To continue the proof of theorem \ref{theo:RepTheoInfiniteSample}, we note that 
we already showed that $f_1$ has the desired structure \eqref{eq:RepresenterIntegral}.
Let us assume we have shown \eqref{eq:RepresenterIntegral} for all $f_1,\ldots,f_{l-1}$ for an $l \in \{2,\ldots,L\}$. To obtain \eqref{eq:RepresenterIntegral} 
for $f_l$, we proceed in the same fashion as in the first part of the proof in section \ref{sec:InfRepTheo}. To this end, we now define
$J_{g_{l+1},\ldots,g_L} : {\cal H}_l \to [0,\infty)$ by
$$
J_{g_{l+1},\ldots,g_L}(g_l) := \int_{R_{l+1} \times R_1} \tilde{\cal L}_l \left(y, g_l(\bxi)\right) \ \mathrm{d} G_{l,\star}(\PP)(\bxi,y) + \lambda_l \| g_l \|_{{\cal H}_l}^2,
$$
where $\tilde{\cal L}_l(y,\bz) := {\cal L}(y,f_1 \circ \ldots \circ f_{l-1}(\bz))$
and $G_{l,\star}(\PP)$ is the pushforward of $\PP$ onto $R_{l+1} \times R_1$ defined by $G_l(\bx,y) = (g_{l+1} \circ \ldots \circ g_L(\bx), y)$.
Then it holds
\begin{align*}
& \ \min_{g_l \in {\cal H}_l,\ldots,g_L \in {\cal H}_L} J(f_1,f_2,\ldots,f_{l-1},g_l,g_{l+1}, \ldots ,g_L) \\ 
 = & \ \min_{g_{l+1} \in {\cal H}_{l+1},\ldots,g_L \in {\cal H}_L}
\left( \min_{g_l \in {\cal H}_l} 
J_{g_{l+1},\ldots,g_{l+1}}(g_l) \right) + \sum_{i=l+1}^L \lambda_{i} \| g_i \|^2_{{\cal H}_i}
\end{align*}
and we need to show that a minimizer of $J_{g_{l+1},\ldots,g_L}$ admits a representation of type \eqref{eq:RepresenterIntegral}.
To this end, we begin by defining a Nemitski vector loss function and we subsequently prove that these loss functions admit the representation we need.

\begin{definition} \label{def:NemVec} 
 Let ${\cal L}: R_1 \times D \to [0,\infty)$ for some domain $D \subset \R^d$.
 Let $\PP_{R_1}$ denote the marginal distribution of $\PP$ w.r.t.\ the second variable.
 We call ${\cal L}$ a $\PP$-integrable Nemitski vector loss,
 if there exist $b: R_1 \to [0,\infty)$ with $b \in L_{1,\PP_{R_1}}(R_1)$ and a measurable, increasing $h :[0,\infty) \to [0,\infty)$ such that
\begin{align*}
{\cal L}(y,\bz) \leq b(y) + h(\|\bz\|) & ~ \text{ for all } (y,\bz) \in R_1 \times D.
\end{align*} 
If ${\cal L}$ is $k$-times differentiable w.r.t.\ the second variable for all $y \in R_1$, we call it a $k$-times differentiable Nemitski vector loss.
\end{definition}

\begin{lemma} \label{lemma:repLemma}
 Let $l \in \{2,\ldots,L\}$ and let $\PP^l$ be a distribution\footnote{Note that we set $R_{L+1} := D_L = \Omega$.} on $R_{l+1} \times R_1$ and
 let ${\cal L}^{\star}$ be a $\PP^l$-integrable and $1$-differentiable Nemitski vector loss on $R_1 \times R_{l}$ such that 
 the derivative w.r.t.\ the second argument $\nabla_2 {\cal L}$ fulfills
\begin{align*}
 \left\| \nabla_2 {\cal L}^{\star}(y, \bz) \right\| \leq b^{\star}(y) + h^{\star}(\|\bz\|) & ~ \text{ for all } (y,\bz) \in R_1 \times R_{l}
\end{align*} 
for some $b^{\star} \in L_{1,\PP^l_{R_1}}(R_1)$ and a measurable, increasing $h^{\star} :[0,\infty) \to [0,\infty)$. Then, the functional ${\cal R}_{l,\PP^{l}}: {\cal H}_l \to [0,\infty)$ 
defined by
\begin{equation*} 
{\cal R}_{l,\PP^{l}}(f) := \int_{R_{l+1} \times R_1} {\cal L}^{\star}(y,f(\bz)) \ \mathrm{d} \PP^l(\bz,y)
\end{equation*}
is Frechet differentiable and the derivative $\mathrm{d} {\cal R}_{l,\PP^{l}} : {\cal H}_l \to {\cal B}({\cal H}_l,\R)$  is given by
\begin{equation} \label{eq:riskdiff}
\mathrm{d} {\cal R}_{l,\PP^{l}}(f)(g) = \int_{R_{l+1} \times R_1} \nabla_2 {\cal L}^{\star}(y,f(\bz))^T \cdot g(\bz) \ \mathrm{d} \PP^l(\bz,y).
\end{equation}
Furthermore, a critical point of $J^{\star} : {\cal H}_l \to [0,\infty)$ defined by
$$
J^{\star}(f) := {\cal R}_{l,\PP^{l}}(f) + \lambda_l \| f \|_{{\cal H}_l}^2
$$
is given by 
\begin{equation} \label{eq:critPoint}
 f(\cdot) = - \frac{1}{2 \lambda_l} \int_{R_{l+1} \times R_1} K_l(\cdot,\bz) \cdot \nabla_2 {\cal L}^{\star}(y,f(\bz)) \ \mathrm{d} \PP^l(\bz,y).
\end{equation}
\end{lemma}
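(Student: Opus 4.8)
The plan is to mirror the single-layer argument of theorem 5.8 in \cite{Steinwart}, adapted to the vector-valued situation, and to organize it in three stages: first establish Gâteaux differentiability of ${\cal R}_{l,\PP^l}$ together with the formula \eqref{eq:riskdiff}, then upgrade this to Fréchet differentiability by showing that the Gâteaux derivative depends continuously on $f$, and finally read off the critical-point equation \eqref{eq:critPoint} from the stationarity condition via the reproducing property and the interchange of a Bochner integral with a bounded linear functional. The workhorse throughout is the bound forced by \eqref{eq:Kernelprereq}: by the reproducing property $\bc^T g(\bz) = (g, K_l(\cdot,\bz)\bc)_{{\cal H}_l}$ and Cauchy--Schwarz, choosing $\bc$ a unit vector gives $\|g(\bz)\| \le \sqrt{\|K_l(\bz,\bz)\|_2}\,\|g\|_{{\cal H}_l} \le \sqrt{c_l}\,\|g\|_{{\cal H}_l}$, so ${\cal H}_l$ embeds continuously into the continuous functions with $\|g\|_\infty \le \sqrt{c_l}\|g\|_{{\cal H}_l}$.

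First I would treat the Gâteaux derivative by considering $\phi(s) := {\cal R}_{l,\PP^l}(f+sg)$, writing the difference quotient as the integral of $s^{-1}[{\cal L}^\star(y, f(\bz)+sg(\bz)) - {\cal L}^\star(y,f(\bz))]$ and differentiating under the integral sign. Since ${\cal L}^\star$ is $C^1$ in its second argument, the mean value theorem expresses each difference quotient as $\nabla_2 {\cal L}^\star(y, f(\bz)+\theta s g(\bz))^T g(\bz)$, which for $|s|\le 1$ is dominated by $\sqrt{c_l}\|g\|_{{\cal H}_l}\big(b^\star(y) + h^\star(\|f\|_\infty + \sqrt{c_l}\|g\|_{{\cal H}_l})\big)$; this lies in $L_{1,\PP^l}$ since $b^\star$ is integrable and the $h^\star$-term is constant. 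Dominated convergence then produces $\phi'(0)$ and hence \eqref{eq:riskdiff}.

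The hard part will be passing from Gâteaux to Fréchet differentiability, since the Fréchet condition demands a limit uniform over all directions. I would sidestep this by proving instead that $f \mapsto \mathrm{d}{\cal R}_{l,\PP^l}(f)$ is continuous as a map ${\cal H}_l \to {\cal B}({\cal H}_l,\R)$, which is a classical sufficient condition for (continuous) Fréchet differentiability. For $f_n \to f$ in ${\cal H}_l$ — equivalently $f_n \to f$ uniformly, by the embedding above — the operator norm obeys
\[
\|\mathrm{d}{\cal R}_{l,\PP^l}(f_n) - \mathrm{d}{\cal R}_{l,\PP^l}(f)\| \le \sqrt{c_l}\int_{R_{l+1}\times R_1} \big\|\nabla_2{\cal L}^\star(y,f_n(\bz)) - \nabla_2{\cal L}^\star(y,f(\bz))\big\| \ \mathrm{d}\PP^l(\bz,y),
\]
where the integrand tends to $0$ pointwise by continuity of $\nabla_2{\cal L}^\star$ and, for $n$ large, is dominated by $2\big(b^\star(y)+h^\star(\|f\|_\infty+1)\big)\in L_{1,\PP^l}$ via the growth bound (using that $h^\star$ is increasing). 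Dominated convergence drives the right-hand side to $0$. Adding $\lambda_l\|\cdot\|_{{\cal H}_l}^2$, whose Fréchet derivative is $g \mapsto 2\lambda_l(f,g)_{{\cal H}_l}$, yields $\mathrm{d}J^\star(f)(g) = \mathrm{d}{\cal R}_{l,\PP^l}(f)(g) + 2\lambda_l(f,g)_{{\cal H}_l}$.

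Finally, at a critical point $\mathrm{d}J^\star(f)=0$, i.e.\ $\mathrm{d}{\cal R}_{l,\PP^l}(f)(g) + 2\lambda_l(f,g)_{{\cal H}_l} = 0$ for every $g \in {\cal H}_l$. Rewriting the integrand via the reproducing property as $\nabla_2{\cal L}^\star(y,f(\bz))^T g(\bz) = \big(g,\,K_l(\cdot,\bz)\nabla_2{\cal L}^\star(y,f(\bz))\big)_{{\cal H}_l}$, and pulling the bounded functional $(g,\cdot)_{{\cal H}_l}$ out of the integral — legitimate because the integrand is Bochner integrable in ${\cal H}_l$, its norm being $\sqrt{\nabla_2{\cal L}^\star(y,f(\bz))^T K_l(\bz,\bz)\nabla_2{\cal L}^\star(y,f(\bz))} \le \sqrt{c_l}\big(b^\star(y)+h^\star(\|f\|_\infty)\big)\in L_{1,\PP^l}$ — recasts stationarity as
\[
\left(g,\ 2\lambda_l f + \int_{R_{l+1}\times R_1} K_l(\cdot,\bz)\,\nabla_2{\cal L}^\star(y,f(\bz)) \ \mathrm{d}\PP^l(\bz,y)\right)_{{\cal H}_l} = 0 \quad \text{for all } g \in {\cal H}_l.
\]
Since this holds for every $g$, the bracketed element of ${\cal H}_l$ vanishes, which is precisely \eqref{eq:critPoint}.
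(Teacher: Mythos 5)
Your overall strategy tracks the paper's quite closely: the same mean-value-theorem domination plus dominated convergence for the derivative formula, and the same reproducing-property manipulation for the critical-point equation. Your treatment of the last step is, if anything, slightly more careful than the paper's: you justify pulling the functional $(g,\cdot)_{{\cal H}_l}$ through the integral by checking Bochner integrability of the ${\cal H}_l$-valued integrand via $\|K_l(\cdot,\bz)\,\nabla_2{\cal L}^\star(y,f(\bz))\|_{{\cal H}_l}\le\sqrt{c_l}\,(b^\star(y)+h^\star(\|f\|_\infty))$, whereas the paper does an equivalent componentwise computation.

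The genuine gap is in your middle step, the upgrade from G\^ateaux to Fr\'echet differentiability. You invoke the classical criterion that a continuous G\^ateaux derivative yields Fr\'echet differentiability, and to verify continuity of $f\mapsto \mathrm{d}{\cal R}_{l,\PP^l}(f)$ you let the integrand $\|\nabla_2{\cal L}^\star(y,f_n(\bz))-\nabla_2{\cal L}^\star(y,f(\bz))\|$ tend to zero ``by continuity of $\nabla_2{\cal L}^\star$''. But the lemma only assumes ${\cal L}^\star$ to be a $1$-times differentiable Nemitski vector loss, and Definition \ref{def:NemVec} requires only pointwise existence of the derivative in the second variable, not its continuity; a differentiable function need not have a continuous derivative, so this step does not follow from the stated hypotheses. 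The paper avoids the issue by proving Fr\'echet differentiability in one shot: it applies dominated convergence directly to the Fr\'echet remainder quotient
$\left[{\cal L}^\star(y,f(\bz)+g(\bz))-{\cal L}^\star(y,f(\bz))-\nabla_2{\cal L}^\star(y,f(\bz))^T g(\bz)\right]/\|g\|_{{\cal H}_l}$,
whose pointwise convergence to zero needs only differentiability of ${\cal L}^\star$ at the point $f(\bz)$ combined with the embedding bound $\|g(\bz)\|\le\sqrt{c_l}\,\|g\|_{{\cal H}_l}$, and whose uniform domination over $\|g\|_{{\cal H}_l}\le 1$ is exactly the mean-value bound you already derived in your G\^ateaux step. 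So your argument can be repaired either by replacing the two-stage scheme with this direct computation, or by strengthening the hypothesis to continuous differentiability of the loss --- which does hold where the lemma is actually applied in the paper (there $\tilde{\cal L}_l$ has continuous derivative, since ${\cal L}$ is convex and differentiable and the functions $f_1,\ldots,f_{l-1}$ are $C^1$), but is not what the lemma as stated assumes.
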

\begin{proof}
 We have
 \begin{align*}
 & \ \lim_{\|g\|_{{\cal H}_l} \to 0} \frac{{\cal R}_{l,\PP^{l}}(f+g) - {\cal R}_{l,\PP^{l}}(f) - \int_{R_{l+1} \times R_1} \nabla_2 {\cal L}^{\star}(y,f(\bz))^T \cdot g(\bz) \ \mathrm{d} \PP^l(\bz,y)}{\| g \|_{{\cal H}_l}}
 \\ = & \
  \lim_{\|g\|_{{\cal H}_l} \to 0} \int_{R_{l+1} \times R_1} \frac{{\cal L}^{\star}(y,f(\bz) + g(\bz)) - {\cal L}^{\star}(y,f(\bz)) - \nabla_2 {\cal L}^{\star}(y,f(\bz))^T \cdot g(\bz) }{\| g \|_{{\cal H}_l}}\ \mathrm{d} \PP^l(\bz,y)
  \\ \stackrel{(*)}{=} & \
   \int_{R_{l+1} \times R_1} \lim_{\|g\|_{{\cal H}_l} \to 0} \frac{{\cal L}^{\star}(y,f(\bz) + g(\bz)) - {\cal L}^{\star}(y,f(\bz)) - \nabla_2 {\cal L}^{\star}(y,f(\bz))^T \cdot g(\bz) }{\| g \|_{{\cal H}_l}}\ \mathrm{d} \PP^l(\bz,y)
   = 0,
 \end{align*}
 where the last equation follows from the differentiability of ${\cal L}^{\star}$ and $(*)$ follows by the dominated convergence theorem since the integrand is bounded by
 \begin{align*}
  & \ \left| \frac{{\cal L}^{\star}(y,f(\bz) + g(\bz)) - {\cal L}^{\star}(y,f(\bz)) - \nabla_2 {\cal L}^{\star}(y,f(\bz))^T \cdot g(\bz) }{\| g \|_{{\cal H}_l}} \right| \\
  = & \ \left| \frac{\nabla_2 {\cal L}^{\star}(y,cf(\bz) + (1-c)g(\bz))^T \cdot g(\bz) - \nabla_2 {\cal L}^{\star}(y,f(\bz))^T \cdot g(\bz) }{\| g \|_{{\cal H}_l}} \right| \\
  \leq & \ 2 b^{\star}(y) + h^{\star}(\|cf(\bz) + (1-c)g(\bz)\|) + h^{\star}(\|f(\bz)\|) \\
 \end{align*}
for some $c \in [0,1]$ due to the mean value theorem. Since the last line is bounded by 
$2 b^{\star}(y) + 2 h^{\star}(\|f(\bz)\| + 1)$ independently of $g$ for any $g$ with $\| g \|_{{\cal H}_l} \leq 1$, the dominated convergence theorem can be applied, which
proves \eqref{eq:riskdiff}. 

Since a critical point $f$ of $J^{\star}$ fulfills
$$
0 = \mathrm{d} J^{\star}(f)(g) = \mathrm{d} {\cal R}_{l,\PP^{l}}(f)(g) + 2 \lambda_l \langle f, g \rangle_{{\cal H}_l},
$$
for all $g \in {\cal H}_l$, we obtain
\begin{align*}
\langle f, g \rangle_{{\cal H}_l}  = & \ - \frac{1}{2\lambda_l} \mathrm{d} {\cal R}_{l,\PP^{l}}(f)(g) \\
 = & \ - \frac{1}{2\lambda_l} \int_{R_{l+1} \times R_1} \nabla_2 {\cal L}^{\star}(y,f(\bz))^T \cdot g(\bz) \ \mathrm{d} \PP^l(\bz,y) \\
 = & \ - \frac{1}{2\lambda_l} \int_{R_{l+1} \times R_1} \nabla_2 {\cal L}^{\star}(y,f(\bz))^T \cdot \sum_{i=1}^{d_l} \langle g, K_l(\cdot, \bz) \boldsymbol{e}_i \rangle_{{\cal H}_l} \cdot \boldsymbol{e}_i \ \mathrm{d} \PP^l(\bz,y) \\
 = & \ - \frac{1}{2\lambda_l} \sum_{i=1}^{d_l} \left\langle \int_{R_{l+1} \times R_1} \nabla_2 {\cal L}^{\star}(y,f(\bz))^T K_l(\cdot, \bz) \boldsymbol{e}_i 
\ \mathrm{d} \PP^l(\bz,y), g \right\rangle_{{\cal H}_l} \cdot \boldsymbol{e}_i 
\end{align*}
with the reproducing property of $K_l$, which is equivalent to the Bochner-type integral formulation \eqref{eq:critPoint}. This finishes the proof.
\end{proof}

Now, we can apply lemma \ref{lemma:repLemma} with $\PP^l = G_{l,\star}(\PP)$ and ${\cal L}^{\star} = \tilde{\cal L}_l$, which shows that 
a critical point $g^{\star}_l$ of $J_{g_{l+1},\ldots,g_L}$ can be written as 
\begin{align} \label{eq:critPointFinal}
g^{\star}_l(\cdot) = & \ - \frac{1}{2 \lambda_l} \int_{R_{l+1} \times R_1} K_l(\cdot,\bxi) \cdot \nabla_2 \tilde{\cal L}_l(y,g^{\star}_l(\bxi)) \ \mathrm{d} G_{l,\star}(\PP)(\bxi,y) \\
= & \ - \frac{1}{2 \lambda_l} \int_{\Omega \times R_1} K_l(\cdot,g_{l+1} \circ \ldots \circ g_L(\bx)) \cdot \nabla_2 \tilde{\cal L}_l(y,g^{\star}_l \circ g_{l+1} \circ \ldots \circ g_L(\bx))
\ \mathrm{d} \PP(\bx,y), \nonumber
\end{align}
which is of type \eqref{eq:RepresenterIntegral}. Therefore, it just remains to show that $\tilde{\cal L}_l$ fulfills 
the prerequisites of lemma \ref{lemma:repLemma} and that $A_{f_l,f_{l+1},\ldots,f_{L}}(\bx,y) := 
\nabla_2 \tilde{\cal L}_l(y,f_l \circ f_{l+1} \circ \ldots \circ f_L(\bx)) \in L_{1,\PP}$.

\begin{lemma} \label{lemma:IsNemitski}
  $\tilde{{\cal L}}_l$ is a $G_{l,\star}(\PP)$-integrable and $1$-differentiable Nemitski loss
 and the derivative w.r.t.\ the second argument fulfills
\begin{align}  \label{eq:NemitskiNormBound}
 \left\| \nabla_2 \tilde{\cal L}_l(y, \bz) \right\| \leq \tilde{b}(y) + \tilde{h}(\|\bz\|) & ~ \text{ for all } (y,\bz) \in R_1 \times R_l
\end{align} 
for a $\tilde{b} \in L_{1,G_{l,\star}(\PP)_{R_1}}(R_1)$ and a measurable, increasing $\tilde{h} :[0,\infty) \to [0,\infty)$.
\end{lemma}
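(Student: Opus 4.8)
The plan is to verify the three requirements of Definition \ref{def:NemVec} for $\tilde{\cal L}_l$ by transporting the corresponding properties of the original loss ${\cal L}$ through the inner composition $F := f_1 \circ \ldots \circ f_{l-1}$. The crucial preliminary observation is that $F$ is a bounded $C^1$ map on $R_l \subseteq D_{l-1}$. Boundedness is immediate: since $\sup_{\bx}\|K_j(\bx,\bx)\|_2 \le c_j$ yields the continuous embedding ${\cal H}_j \hookrightarrow C(D_j)$ with $\|f_j\|_\infty \le c_j^{1/2}\|f_j\|_{{\cal H}_j}$, the outermost factor $f_1$ is bounded, hence $\|F\|_\infty \le \|f_1\|_\infty =: M < \infty$.

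First I would establish the Nemitski bound on $\tilde{\cal L}_l$ itself. Since ${\cal L}(y,z)\le b_0(y)+h_0(|z|)$ with $h_0$ increasing, and $|F(\bz)|\le M$ for all $\bz\in R_l$, we obtain $\tilde{\cal L}_l(y,\bz)={\cal L}(y,F(\bz))\le b_0(y)+h_0(M)$. Here I would exploit that the map $G_l$ leaves the second coordinate untouched, so the $R_1$-marginal of $G_{l,\star}(\PP)$ coincides with $\PP_{R_1}$; consequently $b_0\in L_{1,\PP_{R_1}}$ lies automatically in $L_{1,G_{l,\star}(\PP)_{R_1}}$, while the constant $h_0(M)$ is harmless because $G_{l,\star}(\PP)$ is finite. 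This simultaneously gives the claimed $G_{l,\star}(\PP)$-integrability of $\tilde{\cal L}_l$.

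Next, for differentiability, I would invoke the induction hypothesis \eqref{eq:RepresenterIntegral}, which writes each inner $f_j$ (for $j\le l-1$) as a Bochner integral against $K_j(\cdot,\ldots)$ with an $L_{1,\PP}$-density $A_j$. Because $K_j\in C^1$ with $\sup\|\mathrm{d} K_j\|\le c_j$, and, by the symmetry $K_j(\bx,\bz)=K_j(\bz,\bx)^T$, the derivative in the first slot obeys the same bound, differentiation under the integral sign is justified by dominated convergence with dominating function $c_j\|A_j(\bx,y)\|$; continuity of the resulting derivative follows likewise from $K_j\in C^1$. This shows each $f_j\in C^1$ with $\|Df_j\|_\infty\le \frac{c_j}{2\lambda_j}\|A_j\|_{L_{1,\PP}}=:C_j$, and the chain rule gives $F\in C^1$ with $\|\nabla F\|_\infty\le \prod_{j=1}^{l-1}C_j=:C_F$. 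Hence $\tilde{\cal L}_l$ is $1$-times differentiable in its second argument, with $\nabla_2\tilde{\cal L}_l(y,\bz)={\cal L}^{(1)}(y,F(\bz))\,\nabla F(\bz)$.

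Finally, the gradient bound \eqref{eq:NemitskiNormBound} drops out: using $|{\cal L}^{(1)}(y,z)|\le b_1(y)+h_1(|z|)$, the boundedness $|F(\bz)|\le M$, and $\|\nabla F(\bz)\|\le C_F$, I obtain $\|\nabla_2\tilde{\cal L}_l(y,\bz)\|\le C_F\,(b_1(y)+h_1(M))$, so one may take $\tilde b(y):=C_F\,b_1(y)\in L_{1,G_{l,\star}(\PP)_{R_1}}$ and $\tilde h\equiv C_F\,h_1(M)$. The hard part will be the third step: rigorously justifying differentiation under the Bochner integral and propagating the operator-norm derivative bounds for the matrix-valued kernels through the chain rule, together with ensuring that $R_l$ lies in the region where the composition $F$ is genuinely differentiable.
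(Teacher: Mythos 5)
Your proposal is correct and follows essentially the same route as the paper's proof: bound $\tilde{\cal L}_l$ via $\|f_1\|_\infty < \infty$ (from ${\cal H}_1 \hookrightarrow C(D_1)$ and \eqref{eq:Kernelprereq}), obtain differentiability of the composition by the chain rule, control $\sup_{\bx_i}\|\mathrm{d} f_i(\bx_i)\|_{{\cal B}(D_i,R_i)} \leq \frac{c_i}{2\lambda_i}\|A_{f_i,\ldots,f_L}\|_{L_{1,\PP}}$ through the induction hypothesis \eqref{eq:RepresenterIntegral} and dominated convergence, and take $\tilde{b} = c\, b_1$ with a constant $\tilde{h}$. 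The only cosmetic difference is that you make explicit the (correct) observation that $G_l$ preserves the $R_1$-marginal, which the paper states earlier in the main proof rather than inside the lemma.
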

\begin{proof}
Since 
\begin{align*}
|\tilde{\cal L}_l(y,\bz)| = & \ |{\cal L}(y,f_1 \circ \ldots \circ f_{l-1}(\bz))|
\leq b_0(y) + h_0(f_1 \circ \ldots \circ f_{l-1}(\bz)) \leq   b_0(y) + h_0(\| f_1 \|_{\infty}),
\end{align*}
$\tilde{\cal L}_l$ is a $G_{l,\star}(\PP)$-integrable Nemitski-loss. Here, we again used that $f_1 \in {\cal H}_1 \hookrightarrow C(D_1)$ because 
of \eqref{eq:Kernelprereq}. Since $f_1, \ldots, f_{l-1}$ are differentiable because the respective kernels are in $C^1$, $\tilde{\cal L}_l$ is also 
$1$-differentiable by the chain rule. It remains to show \eqref{eq:NemitskiNormBound}. To this end, note that the chain rule gives us
\begin{align*}
& \ \nabla_2 \tilde{\cal L}_l(y, \bz)(\cdot) = \frac{\partial}{\partial \bz} \left({\cal L}\left(y,f_1 \circ \ldots \circ f_{l-1}(\bz)\right) \right) \\
= & \ {\cal L}^{(1)}(y,f_1 \circ \ldots \circ f_{l-1}(\bz)) \cdot \mathrm{d}f_1(f_2 \circ \ldots \circ f_{l-1}(\bz)) \left(\mathrm{d}f_2(f_3 \circ \ldots \circ f_{l-1}(\bz))
\left(\ldots \mathrm{d}f_{l-1}(\bz)(\cdot) \right)\right),
\end{align*}
which leads to
\begin{align} \label{eq:specificNemitskiBound}
\| \nabla_2 \tilde{\cal L}_l(y, \bz) \| \leq & \ \left( b_1(y) + h_1(|f_1 \circ \ldots \circ f_{l-1}(\bz)|) \right) \cdot \prod_{i=1}^{l-1} \sup_{\bx_i \in D_i} \| \mathrm{d} f_i(\bx_i) \|_{{\cal B}(D_i,R_i)} \nonumber \\
\leq & \ \left( b_1(y) + h_1(\| f_1 \|_{\infty})\right) \cdot \prod_{i=1}^{l-1} \sup_{\bx_i \in D_i} \| \mathrm{d} f_i(\bx_i) \|_{{\cal B}(D_i,R_i)}.
\end{align}
Because of our assumption that we already showed \eqref{eq:RepresenterIntegral} for $f_1,\ldots,f_{l-1}$ and because of \eqref{eq:Kernelprereq}, we 
get by the dominated convergence theorem that 
$$
 \sup_{\bx_i \in D_i} \| \mathrm{d} f_i(\bx_i) \|_{{\cal B}(D_i,R_i)} \leq \frac{1}{2\lambda_i} c_i \| A_{f_i,\ldots,f_L} \|_{L_{1,\PP}}
$$
for all $i=1,\ldots,l-1$. 
Therefore, by setting $\tilde{b}(y) := c \cdot b_1(y)$ and choosing a constant $\tilde{h} := c \cdot  h_1(\| f_1 \|_{\infty})$ with 
$c := \prod_{i=1}^{l-1} \frac{1}{2\lambda_i} c_i \| A_{f_i,\ldots,f_L} \|_{L_{1,\PP}} < \infty$, we obtain \eqref{eq:NemitskiNormBound}.
\end{proof}

Applying lemma \ref{lemma:repLemma} and lemma \ref{lemma:IsNemitski} shows us that $f_l$ fulfills the integral equation \eqref{eq:critPointFinal}. 
To conclude the proof of theorem \ref{theo:RepTheoInfiniteSample}, we note that 
$$
A_{f_l,f_{l+1},\ldots,f_{L}}(\bx,y) := \nabla_2 \tilde{\cal L}_l(y,f_l \circ f_{l+1} \circ \ldots \circ f_L(\bx)) \in L_{1,\PP},
$$
which directly follows from \eqref{eq:specificNemitskiBound} and the fact that $b_1 \in L_{1,\PP_{R_1}}$. This finally shows that $f_l$ admits a representation of 
type \eqref{eq:RepresenterIntegral}. Since the argument is valid for each $l=2,\ldots,L$ and we already proved \eqref{eq:RepresenterIntegral} for $l=1$ in section \ref{sec:InfRepTheo}, 
this finishes the proof of theorem \ref{theo:RepTheoInfiniteSample}.


\bibliographystyle{amsplain}
\bibliography{bibliography}
\end{document}